\newcommand{\citet}[1]{\citeauthor{#1}~\shortcite{#1}}
\newcommand{\citep}{\cite}
\def\eqref#1{equation~\ref{#1}}
\def\1{\bm{1}}
\DeclareMathAlphabet{\mathsfit}{\encodingdefault}{\sfdefault}{m}{sl}
\SetMathAlphabet{\mathsfit}{bold}{\encodingdefault}{\sfdefault}{bx}{n}
\title{Evaluating Relaxations of Logic for Neural Networks: \\A Comprehensive Study}
\author{
Mattia Medina Grespan \and 
Ashim Gupta \And 
Vivek Srikumar\\
\affiliations
University of Utah\\
\emails 
\{mattiamg,ashim,svivek\}@cs.utah.edu
}
\definecolor{orange}{rgb}{1,0.5,0}
\definecolor{mdgreen}{rgb}{0.05,0.6,0.05}
\definecolor{mdblue}{rgb}{0,0,0.7}
\definecolor{dkblue}{rgb}{0,0,0.5}
\definecolor{dkgray}{rgb}{0.3,0.3,0.3}
\definecolor{slate}{rgb}{0.25,0.25,0.4}
\definecolor{gray}{rgb}{0.5,0.5,0.5}
\definecolor{ltgray}{rgb}{0.7,0.7,0.7}
\definecolor{purple}{rgb}{0.7,0,1.0}
\definecolor{lavender}{rgb}{0.65,0.55,1.0}
\definecolor{mdmagenta}{rgb}{1,0,1}
\newcommand{\luka}{\L{}ukasiewicz\xspace}
\newcommand{\godel}{G\"odel\xspace}
\newcommand{\tnorm}{t-norm\xspace}
\newcommand{\tnorms}{t-norms\xspace}
\newcommand{\boolean}[1]{\texttt{#1}}
\newcommand{\pred}[1]{\boolean{#1}}
\newtheorem{definition}{Definition}
\newtheorem{example}{Example}
\newtheorem{proposition}{Proposition}
\newcommand{\p}[1]{\left(#1\right)}
\newcommand{\pb}[1]{\left[#1\right]}
\newcommand{\pc}[1]{\left\{#1\right\}}
\newcommand{\mbool}[1]{\text{\boolean{}}}
\begin{document}

\maketitle

\begin{abstract}

    Symbolic knowledge can provide crucial inductive bias for training neural models, especially in low data regimes. A successful strategy for incorporating such knowledge involves relaxing logical statements into sub-differentiable losses for optimization. 
    In this paper, we study the question of how best to relax logical expressions that represent labeled examples and knowledge about a problem; we focus on sub-differentiable \tnorm relaxations of logic. 
    We present theoretical and empirical criteria for characterizing which relaxation would perform best in various scenarios. In our theoretical study driven by the goal of preserving tautologies, the \luka \tnorm performs best. However, in our empirical analysis on the text chunking and digit recognition tasks, the product \tnorm achieves best predictive performance. We analyze this apparent discrepancy, and conclude with a list of best practices for defining loss functions via logic.

\end{abstract}

\section{Introduction}
\label{sec:intro}
Neural networks are remarkably effective across many domains and tasks; but their usefulness is limited by their data hungriness. A promising direction towards alleviating this concern involves augmenting learning with rules written in first-order logic~\citep[inter alia]{rocktaschel2015injecting,li2019augmenting,li2019logic-driven,fischer2019dl2}. 
To make rules amenable with gradient-based learning, this approach calls for relaxing the logical operators to define sub-differentiable loss terms.
A systematic method to perform this relaxation
uses a well-studied family of binary operators, namely \emph{triangular norms} or \emph{t-norms}~\citep{klement2013triangular}. 
Different t-norms define different $[0,1]$-valued interpretations of the Boolean operators.

There are infinitely many such t-norm logics, but the most commonly used ones are: Product~\citep{rocktaschel2015injecting,li2019logic-driven,asai2020logicguided}, \godel~\citep{minervini2017adversarial} and \luka~\citep{bach2017hinge}. 
While the usefulness of such relaxations is well established, the questions of how they compare against each other, and even how such a comparison should be defined remain open.

This paper is a first step towards answering these important questions by analyzing the three t-norm relaxations and their variants. To do so, we define the criteria for quantifying the goodness of a t-norm based relaxation. On the theoretical side, we rank logic relaxations by their \emph{consistency}, i.e., their ability to preserve the truth of tautologies. On the empirical side, we use a principled approach to construct loss functions using t-norms that subsume traditional loss functions like cross-entropy, and study how well different relaxations compare with standard gradient-based learning. We report the results of experiments on two tasks: jointly recognizing digits and predicting the results of arithmetic operators, and text chunking. Both the theoretical and empirical criteria concur in the recommendation that a variant of the Product t-norm ($\mathcal{R}$-Product) is most suitable for introducing logical rules into neural networks.

In summary, the main contributions of this work are:

\begin{itemize}[nosep]
    \item We define theoretical and empirical properties that any relaxation of logic should have to be useful for learning.
    
    \item We define the consistency of relaxations to rank them in terms of their ability to preserve tautologies.
    
    \item We present empirical comparisons of logic relaxations on two tasks where labeled examples and declaratively stated knowledge together inform neural models.
\end{itemize}

\section{Problem Statement and Notation}
\label{sec:problem}
Several recent efforts have shown the usefulness of declarative knowledge to guide neural network learning towards improving model quality. While different approaches exist for incorporating such rules into neural models~\citep[for example]{xu2018semantic}, a prominent strategy involves relaxing logic to the real regime using the well-studied \tnorm relaxations.

\textbf{Triangular norms} (\tnorms) arose in the context of probabilistic metric spaces~\citep{menger1942statistical},
and for our purposes, represent a relaxation of the Boolean conjunction which agrees with the definition of conjunctions for $\{0,1\}$-inputs. Given such a relaxation and a relaxation of $\neg X$ as $1-x$, we have two axiomatic approaches for defining implications. The first (called $\mathcal{S}$-logics) treats implications as  disjunctions (i.e., $X \to Y = \neg X \vee Y$), while the second (called $\mathcal{R}$-logics) defines implications axiomatically. We refer the reader to~\citet{klement2013triangular} for a detailed treatment of \tnorms. 

Table~\ref{tab:tnorms_definition} shows the set of t-norms we consider here; these have been used in recent literature to inject knowledge into neural networks.
However, despite their increasing prevalence, there is no consensus on which \tnorm to employ. A survey of recent papers reveals the use of  $\mathcal{S}$-Product~\citep{rocktaschel2015injecting},  $\mathcal{R}$-Product~\citep{li2019logic-driven,asai2020logicguided}, \luka~\citep{bach2017hinge}, $\mathcal{S}$-\godel~\citep{minervini2017adversarial} and even a mixture of the $\mathcal{S}$-\godel and $\mathcal{R}$-Product~\citep{li2020structured} \tnorms. 

\emph{How do these relaxations of logic compare against each other?}
In this work, we answer this question from both theoretical and empirical perspectives. 

\subsection{Learning From Logic: The Setup}\label{learning_logic}
To compare the different relaxations of logic on an even footing, let us first see a general recipe for converting logic rules to loss functions for a given relaxation\footnote{The setup described here is implicitly present in~\citet{rocktaschel2015injecting},~\citet{li2019logic-driven}, and others.}. We will use the task of recognizing handwritten digits as a running example.

Given a labeling task, we can represent the fact that the label for an instance $x$ is $y$ as a predicate, say $\pred{Label}(x,y)$. In our running example, we could define a predicate $\pred{Digit}(x,y)$ to denote that an image $x$ represents the digit $y$. We could also define a predicate $\pred{Sum}(x_1, x_2, y)$ to denote the fact that the digits in two images $x_1$ and $x_2$ add up to the digit $y~(\text{mod} 10)$. From this perspective, we can treat classifiers as predicting probabilities that the predicates hold.

To train such classifiers, we typically have a training set $D$ of labeled examples $(x, y)$. In our notation, each such example can be represented as the predicate $\pred{Label}(x,y)$ and the training set is a conjunction of such predicates
\begin{align}
    \bigwedge_{(x, y) \in D} \pred{Label}(x, y)
    \label{eq:dataset}
\end{align}

Sometimes, instead of a labeled dataset, we may have a constraint written in logic. In our running example, from the definition of the \pred{Digit} and \pred{Sum} predicates, we know that 
\begin{align}
    &\forall x_1, x_2,~\bigwedge_{y_1, y_2}~\pred{Digit}\p{x_1, y_1} \wedge \pred{Digit}\p{x_2, y_2} \rightarrow \nonumber\\
    &\quad\quad\quad\quad\quad\quad\quad\pred{Sum}\p{x_1, x_2, (y_1 + y_2)~\text{mod}~10} \label{eq:digit-sum-constraint}
\end{align}

Note that such a constraint need not depend on labeled examples, and should hold irrespective of what labels the examples should be assigned. In general, given a large unlabeled set of examples denoted by ${x} \in U$, we can write constraints 
\begin{align}
    \bigwedge_{x \in U} \pred{C}\p{x}
    \label{eq:generic-constraint}
\end{align}
These constraints may be composite formulas constructed with predicates as shown in our running example above.

From this standpoint, we can envision the goal of learning as that of ensuring that the formulas representing labeled examples (~\eqref{eq:dataset}) and constraints (~\eqref{eq:generic-constraint}) hold. Since we are treating classifiers as predicting probabilities that the atomic predicates hold, we can equivalently state the learning problem as that of finding model parameters that maximize the value of a \emph{relaxation} of the conjunction of the formulas representing the data and constraints. In other words, we can use logic to define loss functions.

In this declarative learning setting, we have the choice of using \emph{any} models (e.g., CNNs) for our predicates, and \emph{any} relaxation of logic. If we only have labeled examples, and we use one of the Product relaxations, we recover the widely used cross-entropy loss~\citep{li2019logic-driven,DBLP:conf/ilp/GianniniMDMG19}.

\begin{table}
\setlength{\tabcolsep}{3pt}
\begin{tabular}{r|cccc}
\toprule
 & $\mathcal{S}$-\godel & $\mathcal{R}$-Product &  \luka\\
 \midrule
 $\wedge$  & $\min(x,y)$ & $x\cdot y$  & $\max(0,x+y-1)$ \\
 $\neg$ & $1-x$  & $1-x$ 
     & $1-x$  \\
$\vee$  & $\max(x,y)$ & $x+y-x\cdot y$  & $\min(1,x+y)$ \\
 $\rightarrow$ &  $\max(1-x,y)$ & 
  $\left\{
         \begin{array}{ll}
             1 & \text{if }x \leq y \\
             \frac{y}{x} & \text{otherwise}
         \end{array}
     \right.$ 
     & $\min(1,1-x+y)$ \\
      \bottomrule
\end{tabular}

\caption{T-norm relaxations studied in this work. Here, the letters $x$ and $y$ denote the relaxed truth values of the arguments of the formulas. In the implication definitions, $x$ and $y$ denote the antecedent and the consequent respectively. The table does not show $\mathcal{S}$-Product: it agrees with $\mathcal{R}$-Product for all the connectives except the implication, defined as $1-x+x\cdot y$. We are defining $\mathcal{R}$-Product using its SBL$_\sim$ extension with involutive negation (See ~\protect\citet{esteva2000residuatedinvolutive}).}
\label{tab:tnorms_definition}
\end{table}

\paragraph{Notation.} We use upper case letters (e.g., $\pred{P}$, $\pred{Digit}$) to represent Booleans, and lower cased letters (e.g., $p$, $digit$) to represent their relaxations. In some places, for clarity, we use square brackets to denote the relaxation of a Boolean formula \boolean{A} (i.e., [\boolean{A}]$=$a).

\section{Validity of Relaxed Logic}
\label{sec:desiderata}

In this section, we propose three criteria that a logic relaxation should satisfy to be useful for learning.

\paragraph{Consistency.}
\label{sec:consistency} The language of logic can declaratively introduce domain knowledge, invariants, or even reasoning skills into neural networks. However, to admit reasoning, tautologies should always hold. That is, the truth value of any tautology should be $1$ irrespective of the value of its constituent atomic predicates. Equivalently, the integral of the relaxation of a tautology over the domain of its atomic predicates should be $1$. We can formalize this intuition.

\begin{definition}\label{consistency}
Let \boolean{T} be a tautology in predicate logic formed with a set of atomic predicates $\mathcal{T}$, and let $L$ be a logic relaxation. The \emph{consistency} of \boolean{T} in $L$, denoted as $\kappa^L(\text{\boolean{T}})$, is defined as 
\begin{align}
    \kappa^L(\text{\boolean{T}})=\int_0^1[\text{\boolean{T}}]\hspace{0.1cm}\mathrm{d}\mathcal{T} \label{eq:consistency-definition}
\end{align}
\end{definition}
If the consistency $\kappa^L(\text{\boolean{T}})= 1$, we will say that the tautology \boolean{T} is consistent under the relaxation $L$.

\paragraph{Self-consistency.} Every Boolean statement implies itself. That is, the statement $\pred{P} \leftrightarrow \pred{P}$ is a tautology for any $\pred{P}$. This observation gives us the definition of \emph{self-consistency} of a formula under a given relaxation.

\begin{definition} Let {\boolean{P}} be any Boolean formula in predicate logic with a set of atomic predicates $\mathcal{P}$ , and let $L$ be a logic relaxation. The \emph{self-consistency} of $\mathcal{P}$ in the logic $L$, denoted as $\kappa_S^L(\text{\boolean{P}})$, is defined as 
\begin{align}
    \kappa_S^L(\text{\boolean{P}})=\kappa^L(\text{\boolean{P}}\leftrightarrow     \text{\boolean{P}})=\int_0^1[\text{\boolean{P}}\leftrightarrow \text{\boolean{P}}]\hspace{0.1cm}\mathrm{d}\mathcal{P}
    \label{eq:self-consistency-definition}
\end{align}
\end{definition}

If $\kappa_S^L(\text{\boolean{P}})\neq 1$ we will say that formula \boolean{P} is not self-consistent under a relaxation $L$.
Since we consider a dataset to be a conjunction of facts (~\eqref{eq:dataset}), the self-consistency of large conjunctions allows us to judge whether a dataset implies itself under a relaxation.

\paragraph{Sub-differentiability.}\label{sec:properties} Since our eventual goal is to relax declaratively stated knowledge to train neural networks, the relaxations should admit training via backpropagation. As a result, the functions defining the relaxed logical operators should at least be sub-differentiable.

In sum, we consider a logic relaxation to be valid if the following properties hold:
\begin{enumerate}
    \item[(P1)] It must be sub-differentiable over the interval $[0,1]$. 
    \item[(P2)] It must be consistent for any tautology.
    \item[(P3)] It must be self-consistent for any Boolean formula.
\end{enumerate}
All the relaxations we study here satisfy property P1\footnote{This is not always the case. In $\mathcal{R}$-\godel logics, for instance, implications are not sub-differentiable:  $\pb{\boolean{X}\rightarrow\boolean{Y}}$,  takes value 1 if $y\geq x$, and $y$ otherwise.}. Even among sub-differentiable relaxations, some may be easier than others to learn using gradient-based approaches. We consider this empirical question in~\S \ref{sec:experiments}. Properties P2 and P3 do not always hold, and we will prefer relaxations that have higher values of consistency and self-consistency.

\begin{table}[tp]
    
    \resizebox{\columnwidth}{!}{
    \setlength{\tabcolsep}{1.1pt}
    \begin{tabular}{lrrrr}\toprule
      Tautologies & $\mathcal{S}$\text{-Prod.} & $\mathcal{S}$\text{-\godel} & \L{}uka. & $\mathcal{R}$\text{-Prod.} \\ \midrule
    \begin{tabular}{@{}l@{}}\textbf{Axiom Schemata} \\ \quad $\text{\boolean{P}}\rightarrow (\boolean{Q}\rightarrow \boolean{P})$\end{tabular}          & \begin{tabular}{@{}c@{}}\\ 0.92\end{tabular}                    & \begin{tabular}{@{}c@{}}\\ 0.79\end{tabular} & \begin{tabular}{@{}c@{}}\\ 1\end{tabular} & \begin{tabular}{@{}c@{}}\\ 1\end{tabular} \\ 
     \quad $(\boolean{P}\rightarrow(\boolean{Q}\rightarrow \boolean{R}))\rightarrow((\boolean{P}\rightarrow \boolean{Q})\rightarrow(\boolean{P}\rightarrow \boolean{R}))$			  & 0.88		    & 0.75		& 0.96	 & 0.93	 \\ 
	\quad$(\neg \boolean{P}\rightarrow\neg \boolean{Q})\rightarrow(\boolean{Q}\rightarrow \boolean{P})$		  & 0.86		    & 0.75& 1	 & 0.88 \\ 
	\midrule
	\begin{tabular}{@{}l@{}}\textbf{Primitive Propositions} \\ \quad$(\boolean{P}\vee \boolean{P})\rightarrow \boolean{P}$\end{tabular}		  & \begin{tabular}{@{}c@{}}\\ 0.75\end{tabular}	& \begin{tabular}{@{}c@{}}\\ 0.75\end{tabular}	& \begin{tabular}{@{}c@{}}\\ 0.75\end{tabular}	 & \begin{tabular}{@{}c@{}}\\ 0.69\end{tabular} 	  \\ 
	\quad$\boolean{Q}\rightarrow(\boolean{P}\vee \boolean{Q})$  & 0.92		    & 0.79	& 1	 & 1 \\ 
	\quad$(\boolean{P}\vee \boolean{Q})\rightarrow(\boolean{Q}\vee \boolean{P})$	 & 0.86	& 0.75		& 1	 & 1\\ 
	\quad$(\boolean{P}\vee( \boolean{Q}\vee  \boolean{R} ))\rightarrow( \boolean{Q}\vee(\boolean{P}\vee \boolean{R}))$	  & 0.91	& 0.78	& 1	 & 1 \\ 
	\quad$( \boolean{Q}\rightarrow \boolean{R})\rightarrow((\boolean{P}\vee  \boolean{Q})\rightarrow(\boolean{P}\vee \boolean{R}))$ & 0.90		    & 0.76 & 1	 & 1  \\ 
	\begin{tabular}{@{}l@{}}\textbf{Law of excluded middle} \\ \quad$\boolean{P}\vee\neg \boolean{P}$\end{tabular}	& \begin{tabular}{@{}c@{}}\\ 0.83\end{tabular} & \begin{tabular}{@{}c@{}}\\ 0.75\end{tabular}& \begin{tabular}{@{}c@{}}\\ 1\end{tabular} & \begin{tabular}{@{}c@{}}\\ 0.83\end{tabular}\\ 
	\midrule
	\begin{tabular}{@{}l@{}}\textbf{Law of contradiction} \\ \quad$\neg(\boolean{P}\wedge\neg \boolean{P})$\end{tabular}			  & \begin{tabular}{@{}c@{}}\\ 0.83\end{tabular}	& \begin{tabular}{@{}c@{}}\\ 0.75\end{tabular}	& \begin{tabular}{@{}c@{}}\\ 1\end{tabular} & \begin{tabular}{@{}c@{}}\\ 0.83\end{tabular}\\ 
	\begin{tabular}{@{}l@{}}\textbf{Law of double negation} \\ \quad$\boolean{P}\leftrightarrow\neg(\neg \boolean{P})$\end{tabular}  &\begin{tabular}{@{}c@{}}\\ 0.70\end{tabular}   & \begin{tabular}{@{}c@{}}\\ 0.75\end{tabular} 	& \begin{tabular}{@{}c@{}}\\ 1\end{tabular} 	 & \begin{tabular}{@{}c@{}}\\ 1\end{tabular} \\ 
	\midrule
	\begin{tabular}{@{}l@{}}\textbf{Principles of transposition} \\ \quad$(\boolean{P}\leftrightarrow \boolean{Q})\leftrightarrow (\neg \boolean{P}\leftrightarrow\neg \boolean{Q})$\end{tabular}   & \begin{tabular}{@{}c@{}}\\ 0.61\end{tabular}   & \begin{tabular}{@{}c@{}}\\ 0.67\end{tabular} & \begin{tabular}{@{}c@{}}\\ 1\end{tabular}  & \begin{tabular}{@{}c@{}}\\ 0.59\end{tabular} \\ 
	\midrule
	\begin{tabular}{@{}l@{}}\textbf{Laws of tautology} \\ \quad$\boolean{P}\leftrightarrow (\boolean{P}\wedge \boolean{P})$\end{tabular}  & \begin{tabular}{@{}c@{}}\\ 0.69\end{tabular}   & \begin{tabular}{@{}c@{}}\\ 0.75\end{tabular}& \begin{tabular}{@{}c@{}}\\ 0.75 \end{tabular}& \begin{tabular}{@{}c@{}}\\ 0.50\end{tabular} \\ 
	\quad$\boolean{P}\leftrightarrow (\boolean{P}\vee \boolean{P})$  & 0.69 & 0.75	& 0.75	 & 0.69  \\ 
	\begin{tabular}{@{}l@{}}\textbf{De Morgan's Laws} \\ \quad$(\boolean{P}\wedge \boolean{Q})\leftrightarrow\neg(\neg \boolean{P}\vee \neg \boolean{Q})$\end{tabular} & 0.75 & 0.75 & 1 & 1 \\
	\quad$\neg(\boolean{P}\wedge \boolean{Q})\leftrightarrow\neg(\neg \boolean{P}\vee \neg \boolean{Q})$ & 0.75 & 0.75 &1 & 1\\
	\bottomrule
\end{tabular}}
\caption{Consistencies of a (subset) of representative set of tautologies under different logic relaxations. We see here that the $\mathcal{R}$-Product and \luka relaxations are generally more consistent, suggesting that they are preferable to the other two relaxations. Other tautologies we examined show the same trends.}
\label{tab:tautotable}
\end{table}

\section{Truth Preservation of Relaxations}
\label{sec:truthpres}

In this section, we will assess the relaxations from Table~\ref{tab:tnorms_definition} with respect to properties P2 and P3.

\subsection{Consistency}\label{consistency_subsection}
Property P2 expects \textit{every} tautology to be consistent under a valid logic relaxation. Since predicate logic admits infinitely many tautologies, we will use a representative set of tautologies for our evaluation. This set contains the \textit{Axiom Schemata} of the Hilbert proof system for predicate logic\footnote{This choice is motivated because, loosely speaking, every tautology is generated using the Axiom Schemata with the \emph{modus ponens} proof rule.}, the primitive propositions, and a set of elementary properties defined by~\citet{russell1910principia}.

Table~\ref{tab:tautotable} shows the consistency for each tautology for our \tnorm relaxations. Comparing results across the columns, we see that \luka and $\mathcal{R}$-Product \tnorm logics \textit{preserve truth} the most across our representative set. In general, we find $\mathcal{R}$ logics to be better at preserving truth than $\mathcal{S}$ logics.

\begin{example} Consistency of the tautology $\boolean{A}\rightarrow\boolean{A}$ using $\mathcal{S}$-Product. 
 \begin{align*}
    [\boolean{A}\rightarrow \boolean{A}] &=1-a+a^2
 \end{align*}
By the definition of consistency (~\eqref{eq:consistency-definition}), we have
 \begin{align*}
     \kappa^{\mathcal{S}\text{-Product}}(\boolean{A}\rightarrow \boolean{A}) =
    & \int_0^1 1-a+a^2\hspace{.1cm}\,\mathrm{d}a = \frac{5}{6}\approx0.83
 \end{align*}
\end{example}

\subsection{Self-consistency}
The validity property P3 states that \textit{every} well-formed Boolean formula should be self-consistent for a relaxation of the logic to be valid. It turns out that the definition of implications for any $\mathcal{R}$ logic (including \luka) guarantees the self-consistency of every formula.

\begin{proposition}\label{prop:R-is-self-consist}
Every formula is self-consistent under any $\mathcal{R}$-logic relaxation.
\end{proposition}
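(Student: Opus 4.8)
The plan is to prove something slightly stronger than the stated integral identity: that the integrand $[\boolean{P}\leftrightarrow\boolean{P}]$ is the constant function $1$ on the unit cube $[0,1]^{|\mathcal{P}|}$. Once that is established, the definition of self-consistency (\eqref{eq:self-consistency-definition}) immediately gives $\kappa_S^L(\boolean{P})=\int_0^1 1\,\mathrm{d}\mathcal{P}=1$, for any formula $\boolean{P}$, regardless of its size or structure.

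First I would unfold the biconditional in the standard way, $\boolean{P}\leftrightarrow\boolean{P}\equiv(\boolean{P}\rightarrow\boolean{P})\wedge(\boolean{P}\rightarrow\boolean{P})$, so that under a relaxation $[\boolean{P}\leftrightarrow\boolean{P}]=[\boolean{P}\rightarrow\boolean{P}]\otimes[\boolean{P}\rightarrow\boolean{P}]$, where $\otimes$ denotes the t-norm (the relaxed conjunction). The point to stress is that both occurrences of $\boolean{P}$ are literally the same formula over the same atoms, so $[\boolean{P}]$ is a single number $t\in[0,1]$, and the relaxed implication therefore receives $t$ as both antecedent and consequent. Next I invoke the defining residuation property of $\mathcal{R}$-implications: $[\boolean{A}\rightarrow\boolean{B}]$ is the largest $z\in[0,1]$ with $[\boolean{A}]\otimes z\le[\boolean{B}]$. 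Specializing to $\boolean{A}=\boolean{B}=\boolean{P}$, monotonicity of $\otimes$ together with its neutral element $1$ gives $t\otimes z\le t\otimes 1=t$ for every $z$, so the constraint is vacuous and $[\boolean{P}\rightarrow\boolean{P}]=1$. (Concretely, this is the $x\le y$ branch for $\mathcal{R}$-Product and $\min(1,1-t+t)=1$ for \luka; see Table~\ref{tab:tnorms_definition}.) Hence $[\boolean{P}\leftrightarrow\boolean{P}]=1\otimes 1=1$, using that $1$ is the identity of $\otimes$; since $t$ was arbitrary, the integrand is identically $1$, and the integral is $1$.

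The only place where care is actually required --- and where a careless argument could fail --- is the observation that self-consistency feeds the \emph{same} relaxed value into both arms of the implication; that is precisely what collapses $[\boolean{P}\rightarrow\boolean{P}]$ to $1$ pointwise. By contrast, replacing the two copies with distinct-but-classically-equivalent subformulas would in general give different relaxed values and break the argument, which is why self-consistency is a nontrivial requirement in the first place. A minor secondary point is fixing the convention for $\leftrightarrow$: under either common choice, $(\boolean{A}\rightarrow\boolean{B})\wedge(\boolean{B}\rightarrow\boolean{A})$ or $\min$ of the two implications, the conclusion is the same because each implication evaluates to $1$. I do not expect any substantive obstacle beyond keeping these definitional details straight; the computation itself is essentially one line.
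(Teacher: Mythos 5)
Your proof is correct and follows essentially the same route as the paper's: decompose $\boolean{P}\leftrightarrow\boolean{P}$ into a relaxed conjunction of two implications, use the residuation property together with monotonicity of the t-norm (and $1$ as its neutral element) to conclude each implication evaluates to $1$ pointwise, and then integrate the constant $1$. Your added emphasis that both arms receive the \emph{same} relaxed value $t$ is a fair clarification of the same argument, not a different one.
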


This follows directly from the definition of the  \tnorm and the properties of residua. The appendix has a proof sketch.

However, the same is not true for $\mathcal{S}$-logics. For example, for the conjunction $\boolean{P}=\boolean{A}\wedge\boolean{B}$, the self-consistency under the \godel relaxation $\kappa_S^{\mathcal{S}\text{-\godel}}(\boolean{P}) = 0.75$ and under the Product relaxation, we have $\kappa_S^{\mathcal{S}\text{-Product}}(\boolean{P})\approx 0.74$.
These results suggest that the $\mathcal{R}$-Product and \luka relaxations are preferable from the perspective of property P3 as well.

Intriguingly, we find that the $\mathcal{S}$-Product logic is \emph{eventually} self-consistent for large monotone conjunctions. 

\begin{proposition}\label{prop:selfconprod}
Let $A=\bigwedge_{i=1}^n \boolean{A}_i$ be a conjunction consisting of $n$ atomic predicates $\boolean{A}_1, \boolean{A}_2, \dots, \boolean{A}_n$. The self-consistency of $A$ is given by $\kappa_S^{\mathcal{S}\text{-Product}}(A)=1-\frac{2}{2^n}+\frac{3}{3^n}-\frac{2}{4^n}+\frac{1}{5^n}$.
\end{proposition}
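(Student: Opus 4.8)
The plan is to unfold the $\mathcal{S}$-Product relaxation of $A\leftrightarrow A$ into a polynomial in the single quantity $p := [A] = a_1 a_2 \cdots a_n$, and then integrate that polynomial over the cube $[0,1]^n$ term by term, using the fact that each monomial $p^k$ factors into a product of independent one-dimensional integrals.

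First I would compute the integrand. Under $\mathcal{S}$-Product the conjunction is ordinary multiplication, so $[A] = a_1\cdots a_n = p$, and the implication is $[\boolean{X}\to\boolean{Y}] = 1 - x + xy$, giving $[A\to A] = 1 - p + p^2$. Since $A\leftrightarrow A$ abbreviates $(A\to A)\wedge(A\to A)$ and the $\mathcal{S}$-Product conjunction is again the product,
\[
  [A\leftrightarrow A] \;=\; \bigl(1 - p + p^2\bigr)^2 \;=\; 1 - 2p + 3p^2 - 2p^3 + p^4 .
\]
I would emphasize that this is the \emph{square} of $1-p+p^2$: this is exactly what distinguishes self-consistency from the consistency of $\boolean{A}\to\boolean{A}$ computed in the Example above.

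Next I would evaluate $\kappa_S^{\mathcal{S}\text{-Product}}(A) = \int_{[0,1]^n} [A\leftrightarrow A]\,\mathrm{d}a_1\cdots\mathrm{d}a_n$ by linearity, reducing it to the integrals $\int_{[0,1]^n} p^k\,\mathrm{d}a_1\cdots\mathrm{d}a_n$ for $k = 0,1,2,3,4$. As $p^k = \prod_{i=1}^n a_i^k$ is a product of functions of distinct variables, Fubini's theorem gives $\int_{[0,1]^n} p^k\,\mathrm{d}a_1\cdots\mathrm{d}a_n = \prod_{i=1}^n \int_0^1 a_i^k\,\mathrm{d}a_i = (k+1)^{-n}$. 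Substituting $k=0,1,2,3,4$ with the coefficients $1,-2,3,-2,1$ yields
\[
  \kappa_S^{\mathcal{S}\text{-Product}}(A) \;=\; 1 - \frac{2}{2^n} + \frac{3}{3^n} - \frac{2}{4^n} + \frac{1}{5^n},
\]
as claimed. As sanity checks, $n=1$ gives $0.7$ (matching the value of $\boolean{P}\leftrightarrow\neg(\neg\boolean{P})$ under $\mathcal{S}$-Product in Table~\ref{tab:tautotable}) and $n=2$ gives $\approx 0.748$ (matching the value $\kappa_S^{\mathcal{S}\text{-Product}}(\boolean{A}\wedge\boolean{B})$ quoted in the text).

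There is no real obstacle here: the argument is elementary polynomial integration. The only points that need a little care are pinning down the convention that $A\leftrightarrow A$ is the product-conjunction of two copies of $A\to A$ — so that the integrand is $(1-p+p^2)^2$ and not $1-p+p^2$ — and the routine bookkeeping of expanding that square; the factorization of the $n$-dimensional integral and the evaluation $\int_0^1 a^k\,\mathrm{d}a = 1/(k+1)$ are immediate.
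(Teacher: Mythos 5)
Your proposal is correct, and it starts from the same integrand as the paper: both reduce self-consistency to $\int_{[0,1]^n}\bigl(1-p+p^2\bigr)^2\,\mathrm{d}a_1\cdots\mathrm{d}a_n$ with $p=\prod_{i=1}^n a_i$, exactly the $[F]$ the paper writes down. Where you diverge is in evaluating this integral. The paper's appendix proof integrates out one variable at a time, introducing the partial products $b_{n-1}, b_{n-2},\dots$, observing that each integration bumps the exponent in the denominators $2^k,3^k,4^k,5^k$, and then generalizing the pattern ("repeating this procedure") — in effect an induction on $n$, as the one-line proof in the main text says. You instead expand the square once, $1-2p+3p^2-2p^3+p^4$, and note that each monomial factors across coordinates, so $\int_{[0,1]^n}p^k = (k+1)^{-n}$ by Fubini; the closed form then drops out immediately. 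Your route buys a shorter and arguably more rigorous argument, since it replaces the informal pattern-generalization step with a one-line factorization, and your sanity checks ($n=1$ giving $0.7$, $n=2$ giving $\approx 0.75$) tie the formula back to the values quoted in the paper; the paper's iterated-integration version, on the other hand, makes explicit how the coefficients evolve step by step, which is the structure its stated induction rests on. Either way the key convention — that $A\leftrightarrow A$ relaxes to the product of the two implication relaxations, hence the square $(1-p+p^2)^2$ rather than $1-p+p^2$ — is handled correctly in your write-up, and that is the only place a real mistake could have crept in.
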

\begin{proof}
By induction over the size of the conjunction $n$. 
\end{proof}

We see that, as $n\to\infty$, the self-consistency of a monotone conjunction approaches $1$. In other words, large conjunctions (e.g., representing large datasets) are essentially self-consistent under the $\mathcal{S}$-Product relaxation.

\section{Empirical Comparisons of Relaxed Logic}
\label{sec:experiments}
In this section, we empirically study the differences between the different logic relaxations using two tasks: recognizing digits and arithmetic operations, and text chunking.
In both tasks, we set up the learning problem in terms of logic, and compare models learned via different logic relaxations.\footnote{Our PyTorch~\citep{paszke2019pytorch} code is archived at \url{https://github.com/utahnlp/neural-logic}}

\subsection{Recognizing Digits and Arithmetic Operations}\label{subsec:digits}

These experiments build upon our running example from~\S \ref{sec:problem}. We seek to categorize handwritten digits; i.e., we learn the predicate $\boolean{Digit}$. In addition, we also seek to predict the sum and product (modulo 10) of two handwritten digit images. These correspond to the predicate $\boolean{Sum}$ we have seen, and a new analogous predicate $\boolean{Product}$.

We use the popular MNIST dataset~\citep{lecun1998mnist} for our experiments, but \emph{only} to supervise the $\boolean{Digit}$ classifier. Rather than directly supervising the other two classifiers, we use coherence constraints over \emph{unlabeled} image pairs that connect them to the $\boolean{Digit}$ model. The constraint for the $\boolean{Sum}$ classifier is shown in~\eqref{eq:digit-sum-constraint}, and the one for the $\boolean{Product}$ classifier is similarly defined.

We set up the learning problem as defined in~\S\ref{sec:problem} and compare performance across different relaxations.

\subsubsection{Data and Setup}
We partition the 60k MNIST training images into TRAIN and DEV sets, with 50k and 10k images respectively. To supervise the \boolean{Digit} model, we sample 1k, 5k and 25k labeled images from TRAIN to form three DIGIT sets. The coherence constraints are grounded in 5k \emph{unlabeled} image pairs consisting of images from TRAIN that are not in any DIGIT set, giving us the PAIR dataset. 

For evaluating the \boolean{Digit} model, we use the original 10k TEST examples from MNIST. For the development and evaluation of the operator models, we sample random image pairs from DEV and TEST to create the PairDEV and PairTEST sets respectively. The ground truth \boolean{Sum} and \boolean{Product} labels for these image pairs can be computed by the sum and product modulo 10 of the image labels.

We use CNNs as the \boolean{Digit}, \boolean{Sum} and \boolean{Product} models.
For the operator models, we concatenated the two images to get CNN inputs. 
To jointly train these models using the labeled DIGIT data and the unlabeled PAIR datasets, we define a loss function by relaxing the conjunction of \boolean{Digit} predicates over the DIGIT examples and the coherence constraints over the PAIR examples. That is, learning requires minimizing:

\begin{dmath}
    -\pb{\p{\bigwedge_{(x,y) \in D}\texttt{Digit}(x,y)}\wedge\p{\bigwedge_{PAIR} \texttt{Sum Coherence}}\wedge\p{\bigwedge_{PAIR}\texttt{Product Coherence}}}
\end{dmath}

In practice, we found that it is important to use a hyperparameter $\lambda$ that weights the relaxed coherence constraints in the loss. We used the DEV sets for hyperparameter tuning using the average of the accuracy of the \texttt{Digit} classifier and the coherences of the other two.

\subsubsection{Results} 
Table~\ref{tab:joint_digit} reports accuracies for the \texttt{Digit} classifier trained with different sizes of DIGIT, and the coherence constraints instantiated over the 5k PAIR examples. We observe that the $\mathcal{R}$-Prod relaxation dominates across all settings, with higher gains when there are fewer labeled examples. (The bold entries in this and other tables are statistically significantly better than the other relaxations at $p<0.05$. The accuracies are averages from three runs with different seeds along with the standard deviation.)
Table~\ref{tab:joint_ope} reports the average of \texttt{Sum}, and \texttt{Prod} accuracies for the same settings, and Table~\ref{tab:joint_coherence} shows the fraction of PairTEST examples where the coherence constraints are satisfied. From these results, we see that the $\mathcal{R}$-Product and  $\mathcal{S}$-\godel relaxations offer the best accuracies.

Interestingly, \luka is the least accurate relaxation. However, the losses compiled using \luka and $\mathcal{S}$-\godel \tnorms were unstable, and the results shown here were achieved with additional assumptions. We defer this technical discussion to~\S \ref{sec:theory-vs-experiments}, and conclude that the stability and accuracy of the $\mathcal{R}$-Product relaxation suggest that it is the most suitable relaxation for this class of problems.

\begin{table}[tp]
    \centering
    \begin{tabular}{rccc}
    \toprule
         & 1000 & 5000 & 25000 \\
    \midrule
     $\mathcal{S}$-\godel & 95.0 {\tiny (0.3)} & 97.4 {\tiny (0.1)} & 97.7 {\tiny (0.1)} \\
     $\mathcal{S}$-Product & 95.1 {\tiny (0.1)} & 98.2 {\tiny (0.0)} & 99.0 {\tiny (0.0)}\\
     $\mathcal{R}$-Product & \textbf{96.3} {\tiny (0.1)} & \textbf{98.4} {\tiny (0.1)} & \textbf{99.2} {\tiny (0.0)} \\
     \luka & 95.8 {\tiny (0.1)}& 98.0 {\tiny (0.1)} & 99.1 {\tiny (0.0)}\\
     \bottomrule
    \end{tabular}
    \caption{\boolean{Digit} accuracies (and standard deviations) from jointly training \boolean{Digit} on DIGIT sizes 1k, 5k, 25k and operators (\boolean{Sum},\boolean{Prod}) on PAIR size 5k.}
    \label{tab:joint_digit}
\end{table}

\begin{table}[tp]
    \centering
    \begin{tabular}{rccc}
    \toprule
         & 1000 & 5000 & 25000 \\
    \midrule
     $\mathcal{S}$-\godel & 87.3 {\tiny(0.5)} & \textbf{91.1} {\tiny(0.1)} & 91.5 {\tiny(0.0)} \\
     $\mathcal{S}$-Product & 76.9 {\tiny(0.8)} & 88.6 {\tiny(0.6)} & 90.1 {\tiny(0.0)} \\
     $\mathcal{R}$-Product & \textbf{88.0} {\tiny(0.3)} & 90.8 {\tiny(0.3)} & \textbf{91.8} {\tiny(0.0)} \\
     \luka & 75.9 {\tiny(3.1)} & 84.5 {\tiny(2.8)} & 82.3 {\tiny(0.3)}\\
     \bottomrule
    \end{tabular}
    \caption{Average of \boolean{Sum} and \boolean{Product} accuracies (and standard deviations) from jointly training \boolean{Digit} on DIGIT sizes 1k, 5k, 25k and operators on PAIR size 5k.}
    \label{tab:joint_ope}
\end{table}

\begin{table}[tp]
    \centering
    \begin{tabular}{rccc}
    \toprule
         & 1000 & 5000 & 25000 \\
    \midrule
     $\mathcal{S}$-\godel & 86.4 {\tiny(0.6)} & 91.4 {\tiny(0.1)}  & 91.9 {\tiny(0.1)}  \\
     $\mathcal{S}$-Product & 79.0 {\tiny(0.7)} & 89.3 {\tiny(0.5)} & 90.3 {\tiny(0.0)} \\
     $\mathcal{R}$-Product & \textbf{89.1} {\tiny(0.3)} & \textbf{91.5} {\tiny(0.4)} & \textbf{92.1} {\tiny(0.1)} \\
     \luka & 77.4 {\tiny(3.0)} & 85.4 {\tiny(2.7)} & 82.6 {\tiny(0.2)} \\
     \bottomrule
    \end{tabular}
    \caption{Average of \boolean{Sum} and \boolean{Prod} Coherence accuracies (and standard deviations) from jointly training \boolean{Digit} on DIGIT sizes 1k, 5k, 25k and operators on PAIR size 5k.}
    \label{tab:joint_coherence}
\end{table}

\subsubsection{Joint Learning vs. Pipelines} 
In the coherence constraints in \eqref{eq:digit-sum-constraint}, if we knew both the \boolean{Digit} terms, we can deterministically compute the value of \boolean{Sum}. This suggests a pipeline strategy for training, where we can train the \boolean{Digit} classifier alone, and use it to assign (noisy) labels to the unlabeled PAIR data. Subsequently, we can train the \boolean{Sum} and \boolean{Product} models independently. How does the joint training strategy compare to this pipeline?

Importantly, we should note that both the joint and the pipeline strategies instantiate the declarative learning approach outlined in~\S \ref{sec:problem} where logical facts are compiled into an optimization learning problem via the logic relaxations. However, in the pipeline, the coherence constraints are not explicitly involved in the loss, because the noisy labels already satisfy them. Since conjunctions are identical for the $\mathcal{S}$ and $\mathcal{R}$ relaxations, they give the same results.

Tables~\ref{tab:supervised_digit},~\ref{tab:supervised_ope}, and~\ref{tab:supervised_coherence} show the results of these pipelined experiments.  We observe the same trends as in the joint learning experiments: $\mathcal{R}$-Product achieves the highest performance. 

Comparing the joint learning and the pipeline results, we observe that the latter are slightly better across relaxations and data settings. As mentioned above, however, pipelining is only viable here because of the special form of our constraint.

\begin{table}[t]
    \centering
    \begin{tabular}{rccc}
    \toprule
         & 1000 & 5000 & 25000 \\
    \midrule
     $\mathcal{S}$-\godel & 95.2 & 97.6 & 97.59 \\
     $\mathcal{R}/\mathcal{S}$-Product & \textbf{96.7} & \textbf{98.4} & \textbf{99.12} \\
     \luka & 96.5 & 98.5 & 98.76 \\
     \bottomrule
    \end{tabular}
    \caption{Pipelined \boolean{Digit} accuracies trained on DIGIT sizes 1k, 5k, 25k. Standard deviation is 0.0 for every entry.}
    \label{tab:supervised_digit}
\end{table}
\begin{table}[t]
    \centering
    \begin{tabular}{rccc}
    \toprule
         & 1000 & 5000 & 25000 \\
    \midrule
     $\mathcal{S}$-\godel & 88.3 {\tiny(0.3)} & 90.9 {\tiny(0.1)}  & 91.3 {\tiny(0.1)} \\
     $\mathcal{R}/\mathcal{S}$-Product & \textbf{89.7} {\tiny(0.1)} & \textbf{92.2} {\tiny(0.0)} & \textbf{93.0} {\tiny(0.0)}\\
     \luka & 83.0 {\tiny(1.9)} & 86.9 {\tiny(3.6)}  & 88.6 {\tiny(0.4)}\\
     \bottomrule
    \end{tabular}
    \caption{Average of Pipelined \boolean{Sum} and \boolean{Product} accuracies (and standard deviations) trained on PAIR size 5k (noisy) labeled with \texttt{Digit} model trained on DIGIT sizes 1k, 5k, 25k.}
    \label{tab:supervised_ope}
\end{table}
\begin{table}[t]
    \centering
    \begin{tabular}{rccc}
    \toprule
         & 1000 & 5000 & 25000 \\
    \midrule
     $\mathcal{S}$-\godel & 86.7 {\tiny(0.3)} & 90.6 {\tiny(0.1)} & 91.3 {\tiny(0.0)} \\
     $\mathcal{R}/\mathcal{S}$-Product & \textbf{89.9} {\tiny(0.2)} & \textbf{92.5} {\tiny(0.0)} & \textbf{92.8} {\tiny(0.0)} \\
     \luka & 83.5 {\tiny(1.8)} & 87.2 {\tiny(3.5)} & 88.7 {\tiny(0.4)}\\
     \bottomrule
    \end{tabular}
    \caption{Average of Pipelined \boolean{Sum} and \boolean{Prod} Coherence accuracies (and standard deviations) trained on PAIR size 5k (noisy) labeled with \boolean{Digit} model trained on DIGIT sizes 1k, 5k, 25k.}
    \label{tab:supervised_coherence}
\end{table}

\subsection{Text Chunking}
\label{subsec:chunking}
Our second set of experiments use the NLP task of text chunking using the CoNLL 2000 dataset~\citep{sang2000introduction}. This task illustrates how relaxed logic can be used to derive loss functions for sequential inputs and outputs.

Text chunking is a sequence tagging problem, where each word of a sentence is assigned a phrase type. For example, in the sentence `\textit{John is playing in the park.}', the word `\textit{John}' is labeled as \texttt{B-NP}, indicating that it starts a noun phrase (\texttt{NP}). We use the standard BIO labeling scheme: \texttt{B-X} indicates the start of a new phrase labeled \texttt{X}, \texttt{I-X} indicates the continuation of a phrase labeled \texttt{X}, and \texttt{O} marks words that do not belong to any of the predefined phrase types. 

In the case of text chunking, the input $x$ is a sequence of words and correspondingly, output $y$ is a sequence of labels (phrase types). Consequently, each position in the sequence is associated with a predicate. For an input $x$ with $n$ words, we have $n$ predicates, one per word.

Using our notation from~\S\ref{sec:problem} we define the predicate, $\pred{Tag}(x_i,y_i)$, to denote that $i^{th}$ word in input $x$ is assigned the label $y_i$. For our preceding example,  \textit{John} being assigned the label \texttt{B-NP} corresponds to the predicate $\pred{Tag}(\text{John, \texttt{B-NP}})$.

\subsubsection{Constraints} 
For each position in the sequence, we have constraints defining pairwise label dependencies. If a word in a sentence has a B/I label of a certain phrase type, then the next word cannot have a I label of a \emph{different} phrase type. For example, we have

\begin{tabular}{rp{0.76\linewidth}}
  $C_{i,1}$: & $\forall i, ~\pred{Tag}(x_i,\texttt{B-NP})\rightarrow \neg \pred{Tag}(x_{i+1},\texttt{I-VP})$  \\ 
  $C_{i,2}$: & $\forall i, ~\pred{Tag}(x_i,\texttt{I-NP})\rightarrow \neg \pred{Tag}(x_{i+1},\texttt{I-VP})$  \\
\end{tabular}

In general, given a labeled dataset $D$, and a set $C$ of $k$ constraints for each word position, we can write the conjunction of all constraints as:

\begin{align} 
\bigwedge_{x,y \in D}\pc{\bigwedge_{i} \p{\pred{Tag}(x_i,y_i) \bigwedge_{k} C_{i,k}}}
\end{align}

We can now use this Boolean formula to state the goal of learning as finding the model parameters of a neural network that maximizes the value of the relaxation derived for each t-norm. We used a bidirectional LSTM over GloVe embeddings~\cite{pennington2014glove} to instantiate \pred{Tag}.

\subsubsection{Experiments and Results}
We compare the four t-norms on two settings. First, in purely supervised learning, the model only learns from labeled examples. Second, we augment the labeled dataset with the constraints described above to study the effectiveness of incorporating simple constraints on outputs.  For both of these settings, we also study the models in a low data regime with training data restricted to 10\%. Following previous work~\citep{sang2000introduction}, we use the F1 score as our evaluation metric.

We report performances of the models trained with different t-norms in Table~\ref{tab:chunking_results}. We observe that for supervised learning (top rows), both variants of product t-norm outperform the other two t-norms by a significant margin. We again observe that for purely supervised learning, both $\mathcal{R}$-Product and $\mathcal{S}$-Product produce identical results since no implication constraints are used. Further, $\mathcal{S}$-\godel performs the worst among the four t-norms. Note that this observation is consistent with our analysis from Table~\ref{tab:tautotable} where $\mathcal{S}$-\godel was found to be least consistent with tautologies. 

Let us now look at the results of augmenting supervised learning with simple output constraints (Table~\ref{tab:chunking_results}, bottom rows). First, we observe that incorporating simple constraints into neural models, using the framework discussed in this work, can indeed improve their performance in all cases. Particularly, this improvement is more pronounced in low data regimes, since models need to rely more on external background knowledge when training data is small.

We also find that both variants of product t-norms outperform  $\mathcal{S}$-\godel and \luka. However, in both data regimes, $\mathcal{R}$-Product outperforms $\mathcal{S}$-Product when incorporating constraints into the neural model. We observe that these results are consistent with the ones obtained in~\S\ref{subsec:digits}.

\begin{table}[tp]
    \centering
    \setlength{\tabcolsep}{3.5pt}
    \begin{tabular}{rcccc}
    \toprule
     \% Train    & $\mathcal{S}$-\godel & $\mathcal{S}$-Prod. & $\mathcal{R}$-Prod. & \luka\\
    \midrule
     10\% & 70.29 & \textbf{79.46} & \textbf{79.46} & 76.50\\
     100\% & 85.33 & \textbf{89.18} & \textbf{89.18} & 87.25\\
     \midrule
     10\% + $C$ & 70.81  & 79.71 & \textbf{80.14} & 76.80\\
     100\% + $C$& 85.49 & 89.19 & \textbf{89.75} & 87.59 \\
     \bottomrule
    \end{tabular}
    \caption{Results for Text Chunking. F1 scores on test set of CoNLL 2000 dataset. Product t-norms outperform other t-norms for purely supervised setting (top rows). $\mathcal{R}$-Product performs best among all t-norms when constraints are augmented into the neural models (bottom rows). Rows with + $C$ include constraints.}
    \label{tab:chunking_results}
\end{table}

\subsection{Theory vs. Experiments}
\label{sec:theory-vs-experiments}

From our analysis of \tnorms in~\S\ref{sec:truthpres}, we found that \luka and $\mathcal{R}$-Product are the most preferable. Empirically, $\mathcal{R}$-Product outperforms the other relaxations.

The consistency analysis suggests that since  $\mathcal{S}$-\godel is least consistent, we should also expect $\mathcal{S}$-\godel to empirically perform the worst. This argument is reinforced by the results across all tasks and settings, if we naively applied the \godel \tnorm to define the loss. The $\mathcal{S}$-\godel results shown in this work were obtained by warm starting the learning using the $\mathcal{R}$-Product relaxation. Without the warm start, we found that not only is learning unstable, the resulting accuracies were also low.

One other discrepancy bears attention: although \luka is most preferable in terms of
consistency on tautologies, the experiments suggest otherwise. From an empirical perspective, a valid relaxation of logic should provide a sufficient gradient signal to make learning feasible. We discovered that loss functions defined by the \luka\tnorm are not amenable to gradient-based learning. Here, we briefly explain this phenomenon. 

Consider the \luka conjunction over $n$ atoms: 
$$\pb{\bigwedge_{i}^n \boolean{A}_i}=\max\p{0,\sum_i^n a_i - (n-1)}$$

For this operation to provide a non-zero output, we need $\sum_i^n a_i >  (n-1)$, or on average, each of the conjuncts $a_i$ should exceed $\frac{n-1}{n}$. 

That is, to provide a useful signal, \luka \tnorm requires the model to assign high probabilities to correct labels. This, of course, is not true for a randomly initialized model when learning starts, contributing to near-zero gradients, and no model updates at all.

To make learning feasible with \luka \tnorm,  we implemented a less strict definition of conjunction inspired by the MAX-SAT relaxation of \citet{bach2017hinge}.
This transforms our learning objective to the form 
 $  \max_\theta\p{\sum_i^n a_i}$.
An important consequence of this approach is that \luka \tnorm in our experiments is merely an approximation of the original \luka \tnorm.

\section{Related Work and Discussion}

The use of \tnorm relaxations to encode knowledge into learning is increasingly prevalent in recent years~\citep[for example]{wang2020joint,minervini2017adversarial,donadello2017LTN}. 
Additionally,~\citet{grefenstette2013distributional}, and~\citet{nandwani2019primal} also implicitly use \tnorms to similar ends. These works do not frame the learning problem as a declarative statement encoded using a single predefined logical language, as in the case of~\cite{sikka2020deep} and~\cite{DBLP:conf/ilp/GianniniMDMG19}.

Our framework is closer to~\citet{li2019logic-driven},~\citet{asai2020logicguided}, and~\citet{wang2020joint} where both data and background knowledge are declaratively stated and encoded in one single \tnorm relaxation that defines the loss.
Our experiments for the extended MNIST digit classification are inspired by~\citet{manheve2018deeproblog}, who employ constraints similar to our coherence constraints.
Of course, the goal of this work is to theoretically, and empirically investigate which relaxation is best suited for use in such problems. To our knowledge, none of the previously mentioned works analyze the choice of the relaxation they use. 

Similar studies to this paper are that of~\citet{EvansG18}, and~\citet{kriekenAH20fuzzy}. The former, includes comparison in performance of \luka, Product, and \godel \tnorm operators used to induce differentiable functions from definitive clauses in neural program synthesis. The latter, perhaps closer to our approach, provides a general theoretical and empirical analysis for different \tnorm relaxations. 
However, there are two key differences with our work: (a) they do not treat labeled data as part of the declarative problem specification, and the constraints are added into a standard cross entropy loss, (b) their  analysis involves the derivatives of the losses and does not discuss the consistency and self-consistency properties. We also studied the importance of the loss gradient signal in gradient-based learning, and hypothesise that a less consistent t-norm would perform worse at characterizing the truth of a Boolean statement than a more consistent one, and as a result, would correspond to poorer empirical performance.

\section{Conclusions}

In this work, we studied the question of how best to relax declarative knowledge to define loss functions. To this end, we define a set of criteria that characterizes which relaxations of logic would be most amenable for preserving tautologies and offer support for gradient-based learning. We also present empirical studies on two tasks using the paradigm of formulating entire learning problems via logic. All our analyzes concur that the $\mathcal{R}$-Product relaxation is best suited for learning in this paradigm. 


\section*{Acknowledgments}
 The authors acknowledge the support of NSF grants \#1801446 (SaTC) and \#1822877 (Cyberlearning). We also thank Ganesh Gopalakrishnan and the members of the Utah NLP group  for their feedback on previous iterations of this work, and the IJCAI reviewers for their valuable feedback.

\bibliographystyle{named}
\bibliography{ijcai21}

\appendix
\newpage 
\section{Relevant Proofs}

\textbf{Proof for Proposition \ref{prop:R-is-self-consist}.}

\begin{proof}

The \tnorms we consider (\godel, \luka and Product) are left-continuous binary operators. For a left-continuous \tnorm $T$, we can uniquely define the residuum (denoted by $\rightarrow$) as the binary operator that satisfies
\begin{align}
    \forall x,y,z \in[0,1],\quad T(z,x)\leq y \text{ iff. } z\leq (x\rightarrow y).
\end{align}
The residuum generalizes the logical implication. From the definition, we see that $x\rightarrow y$ is greater than or equal to every $z$ such that $T(z, x) \leq y$. This allows us to write the residuum in the following equivalent way:
\begin{equation}\label{eq:residua_prop}
    (x\rightarrow y) = \sup\{z\mbox{ }|\mbox{ } T(z,x)\leq y\}.
\end{equation}
Next, we use the monotonicity property of \tnorms. Every \tnorm is a non-decreasing function of both its arguments. In particular, we have:
\begin{align}
    T(z, x) \leq T(1, x) = x
\end{align}
Consequently, for any $x, y \in [0,1]$ such that $x \leq y$, we have $T(z, x) \leq y$. For such $x, y$ pairs, the supremum in \eqref{eq:residua_prop} is identical to $1$. Indeed, we can show that whenever the residuum $x\rightarrow y$ evaluates to 1, we have $x \leq y$. In short,
\begin{align}\label{eq:residuum_one}
    (x\rightarrow y)=1\text{ iff. }x\leq y.
\end{align}
Now, consider the self-consistency of a well-formed Boolean formula \boolean{P} and an $\mathcal{R}$-logic relaxation L, denoted by $\kappa_S^L(\text{\boolean{P}})$. From the definition of self-consistency, we have:
\begin{align*}
    \kappa_S^L\p{\text{\boolean{P}}} & = \kappa^L\p{\text{\boolean{P}}\leftrightarrow\text{\boolean{P}}} \\ 
    & =  \kappa^L\p{\p{\boolean{P} \leftarrow \boolean{P}}\wedge \p{\boolean{P} \rightarrow \boolean{P}}}
\end{align*}
When we relax the conjunction in the final expression, we will end up with a t-norm, both of whose arguments are one from \eqref{eq:residuum_one}. Consequently, the t-norm itself evaluates to one. Applying the definition of consistency (the integral of the constant $1$ over the domain) gives us the required result.
\end{proof}

\noindent \textbf{Proof Sketch for Proposition \ref{prop:selfconprod}.}

\begin{proof}
 Let $F=A\leftrightarrow A=\bigwedge_{i=1}^n\boolean{A}_i\leftrightarrow\bigwedge_{i=1}^n\boolean{A}_i$. 
By the definition of the $\mathcal{S}$-Product logic connectives we have, 
    
    $$[F]=\bigg(1+\bigg(\prod_{i=1}^n a_i\bigg)^2-\bigg(\prod_{i=1}^n a_i\bigg)\bigg)^2$$ 
and by the definition of self-consistency, we have 
    
    $$\kappa_S^{\mathcal{S}\text{-Product}}(A)=\int_0^1[F]\hspace{0.1cm}\mathrm{d}\mathcal{F}$$ 
    
Let us simplify the large product corresponding to the n-conjunction by introducing an auxiliary variable that includes all but the final element in the conjunction $A$. We will call this $n-1$-conjunction $b_{n-1}$. That is,
    $$b_{n-1}=\prod_{i=1}^{n-1} a_i$$
This allows us to write $[F]$ as
    $$[F]=(1+a_n^2b_{n-1}^2-a_n b_{n-1})^2.$$ 

Integrating out the $n^{th}$ variable $a_n$ by expanding out the polynomial, we obtain
    \begin{align*}
    \kappa_n &=\int_0^1\Big(1+a_n^2 b^2_{n-1}-a_n b_{n-1}\Big)^2 \,\mathrm{d}a_n\\
    &=1-b_{n-1}+b_{n-1}^2-\frac{1}{2}b_{n-1}^3+\frac{1}{5}b_{n-1}^4.
    \end{align*}
    Applying the same strategy as before, we can define
    $$b_{n-2}=\prod_{i=1}^{n-2} a_i$$
    then, 
    $$b_{n-1}=a_{n-1}b_{n-2}.$$
    Substituting $b_{n-1}$ in $\kappa_n$ and taking the integral over the $(n-1)^{th}$ variable we obtain,
    \begin{align*}
    \kappa_{n-1}&=\int_0^1 \kappa_n \,\mathrm{d}a_{n-1}\\
    &=\int_0^1 1-a_{n-1} b_{n-2}+a_{n-1}^2 b^2_{n-2}\\
    &\hspace{10 mm}-\frac{1}{2}a_{n-1}^3 b_{n-2}^3+\frac{1}{5}a_{n-1}^4 b_{n-2}^4\,\hspace{0.1cm}\mathrm{d}a_{n-1}\\ 
    &=1-\frac{2}{2^2}b_{n-2}+\frac{3}{3^2}b_{n-2}^2-\frac{2}{4^2}b_{n-2}^3+\frac{1}{5^2} b_{n-2}^4.
    \end{align*}
    Repeating this procedure, every subsequent integral will add one to the degree of the denominators in each term. This allows us to generalize the integral as
    $$\kappa_{n-k}=1-\frac{2}{2^k}b_{n-k}+\frac{3}{3^k}b^k_{n-k}-\frac{2}{4^k}b_{n-k}^3+\frac{1}{5^k}b_{n-k}^4$$
    where,
    $$b_{n-k}=\prod_{i=1}^{n-k}a_i.$$
    In particular, when $k=n$, the empty product becomes 1. Therefore, substituting $b_{n-k}$ in $\kappa_{n-k}$ we obtain,
    
    $$\kappa_S^{\mathcal{S}\text{-Product}}(A)=\kappa_0=1-\frac{2}{2^n}+\frac{3}{3^n}-\frac{2}{4^n}+\frac{1}{5^n}$$
    
    \end{proof}

\section{$\mathcal{R}$-logics and the Connective for Negation}

In table \ref{tab:tnorms_definition}, we abuse the nomenclature for $\mathcal{R}$-Product and $\mathcal{R}$-\godel logics. In these relaxations, the negation and conjunction connectives are respectively defined as 
$$n_\neg(x)=\begin{cases}
       1 &$\quad\text{if  }x=0$\\
       0 &$\quad\text{otherwise}$
     \end{cases}$$ 
     
and, 

$$x\wedge y=\begin{cases}
       0 &$\quad\text{if  }x=y=0$\\
       1 &$\quad\text{otherwise}$
     \end{cases}$$

Unfortunately, these functions are not sub-differentiable. Instead, we introduce the corresponding SBL$_\sim$ extensions for residuated $\mathcal{R}$ \tnorm logics   ~\cite{esteva2000residuatedinvolutive} which are constructed using the involutive negation $n_\neg(x)=1-x$ and sub-differentiable in all of their connectives. We keep the same names $\mathcal{R}$-Product and $\mathcal{R}$-\godel for simplicity.

\section{Reproducibility}

\subsection{Digit Arithmetic experiments}

\paragraph{Model Architecture.} For \boolean{Digit}, \boolean{Sum} and \boolean{Product} models we use 2-layer Convolutional Neural Networks with 3-by-3 padded filters, with ReLU activation and 2-by-2 Maxpool layers with stride 2 in between, with 2 fully connected layers with ReLU activation and dropout ($p=0.5$) in between. 

\paragraph{Seeds and multiple runs. } We use a random seed 20 to create all the dataset used for training, evaluation, and testing. The same seed (20) is used for hyper-parameter tuning. We perform three runs of every experiment using seeds 0, 20, and 50. The standard deviation from the three runs with different seeds was high (up to 5\%) for some of the relaxations over the scarce data settings (DIGIT size 1000 and/or PAIR size 1000). In these cases, we perform extra runs with seeds 1, 10, and 60, reporting the average of the best three.

\paragraph{Hyper-parameter tuning Details}

Validation sets are used for tuning hyper-parameters: learning rate, batch size, optimizer, lambda coefficient for the coherence constraints in the loss. Specifically we perform grid search with: 
\begin{itemize}
    \item learning rates: \{$10^{-1}$, $5\times 10^{-2}$, $10^{-2}$, $5\times10^{-3}$, $10^{-3}$, $5\times10^{-4}$, $10^{-4}$, $5\times10^{-5}$, $10^{-5}$\}
    \item $\lambda$'s: $\{0.05, 0.1, 0.5, 1, 1.5, 2\}$
    \item batch sizes: \{8, 16, 32, 64 \}.
\end{itemize}
 We also treat the optimization method as a hyper-parameter, using the one resulting in best performance between standard Stochastic Gradient Descent and Adam optimizers. 

\paragraph{Number of epochs for training.} The number of epochs for convergence varies between 100 to 1000 epochs across \tnorm relaxations and data settings. All the experiments were run for a maximum of 1600 epochs.

\paragraph{Significance.}  We  performed  t-tests  and observed a statistical significance of at least $p<0.05$ in all the model comparisons we discuss in the main paper.

\paragraph{Warm-restarts.} Using the Stochastic Gradient Descent with Restarts techniques~\cite{loshchilov2016sgdr}, brings, considerably, more stability to the experiments. We use the best number iterations for the first  restart ($T_0$) among 50, 100 and 200,  with a factor of increase of 1 ($T_{mult}$)

\subsection{Experimental Details for Text Chunking}
\paragraph{Model Architecture. } For all our models, we use bidirectional RNN based models, experimenting with LSTMs and GRUs. We use GloVe word embeddings of 300 dimension at input along with 100 dimensional character embeddings. We fix the dropout rate to 0.5 for all our experiments.

\paragraph{Hyper-parameter Tuning}
Since the original CoNLL 2000 dataset does not accompany a validation set, for each experiment, we select 10\% of the total training data for validation. We perform hyper-parameter tuning using grid search on the following parameters: hidden unit size of RNN in the set \{200. 256, 300, 400, 500, 512\}, learning rate in the set \{0.001, 0.005, 0.0001\}. 
\subsection{Weighted Constrained Loss}
We observe that in order to stabilize training, we have to use a $\lambda$ (hyper-parameter) to relatively weigh the loss from both \boolean{Sum} and \boolean{Product} coherence constraints. We find that product based t-norms are less sensitive to different values of $\lambda$.

For chunking, for example, for \luka \tnorm, values around 0.0001 works best, while for $\mathcal{R}$-Product, values greater than 1 provides best results on validation set.

\subsection{Code and Computing Infrastructure}
 We implement all our experiments in the deep learning library PyTorch~\cite{paszke2019pytorch} on a server with the following configuration:
 \begin{itemize}
     \item CPU: AMD EPYC 7601, 32 cores, 2.2 GHz , 
     \item Memory: 512 gb, 
     \item GPU: Titan RTX, 
     \item Disk Space: 8 TB. 
 \end{itemize}

\section{Learning behaviour of neural models}

\subsection{Product relaxations}
    $\mathcal{S}$-Product and $\mathcal{R}$-Product logic are the most stable and less brittle \tnorm logics across different data settings.

\subsection{Discussion about Gradients in \luka relaxation}
\label{sec:non-zero}

From an empirical perspective, a valid relaxation of logic should provide enough gradient signal to make the gradient-based learning process possible. One relevant case of a logic relaxation that does not satisfy this property is the \luka \tnorm. By inspecting the \luka conjunction connective definition: 

\begin{equation}\pb{\bigwedge_{i}^n \boolean{A}_i}=\max\p{0,\sum_i^n a_i - (n-1)}
\label{eq:original_luka}
\end{equation}
we can see that it requires almost absolute certainty for each of its conjuncts to result in a non-zero truth value. This means that the gradient signal provided from an objective that is the conjunction of the dataset term (\eqref{eq:dataset}) and the constraint term (\eqref{eq:generic-constraint}) is almost always zero.




To be able to study the \luka logic, for our experiments we implemented a less strict definition of the conjunction inspired by the MAX-SAT relaxation of \citet{bach2017hinge} assuming the best case scenario $\sum_i^n a_i \geq n-1$. Therefore, the learning problem using \luka logic,  defined in~\S\ref{learning_logic} as

$$\max_\theta\p{\max\p{0,\sum_i^n a_i - (n-1)}}$$

is restated for our experiments as
\begin{equation}\label{eq:relaxd_luka}
  \max_\theta\p{\sum_i^n a_i}
\end{equation}




\subsection{Practical considerations for $\mathcal{S}$\text{-\godel} logic}
In practice, we find that optimizing the loss function derived from the $\mathcal{S}$-\godel relaxation was difficult due to learning instability. To make learning possible, we apply the following techniques:
\begin{itemize}
        \item By definition of $\mathcal{S}$-\godel conjunction, for every epoch, we make the update with respect to the example with the minimum output value in the data. Instead, we apply mini-batch descent over the minimum of each batch, and proceed with the standard SGD process. This means that we use the $\mathcal{S}$-\godel logic over the batches but the $\mathcal{S}$-Product \tnorm (which agrees with the cross-entropy loss) over the epochs. While this is not strictly a \godel conjunction, we found that this was necessary for the optimizer to work.
        \item Given that at the early stages of learning the system is uncertain about every example, and $\mathcal{S}$-\godel conjunction operates over the ``worst'' example, it is hard to get any meaningful signal from the updates. To break ties, we ``warm-up'' the system by running 2 to 10 epochs of learning using the $\mathcal{R}$-Product logic. Interestingly, even though this strategy is needed to learn with $\mathcal{S}$-\godel on both joint and pipeline learning frameworks, the joint system needs only one or two epochs to warm-up, while the pipelined approach needs between 6 to 10 epochs to start learning on its own. These warm-up processes represent on average 40\% of the final accuracies obtained across our experiments.  
        \item In the joint model, we noticed that the accuracies for the \boolean{Digit} classifier decrease after a few epochs hindering training. This is caused because the model is trying to reach the trivial solution to satisfy the coherence constraints (the \boolean{Digit} classifier predicting low scores for all digits to make the value of the conjunction in the antecedent of the constraints as low as possible). In order to avoid this, we froze the parameters corresponding to the \boolean{Digit} model right after the warm-up stage; in other words, we train \boolean{Digit} classifier using $\mathcal{R}$-Product relaxation for a few epochs; enough to successfully continue the training. 
\end{itemize}

\section{Examples of Consistency Calculation} 

Here we calculate consistency under different relaxations for the axiom schema tautology $\boolean{P}\rightarrow (\boolean{Q}\rightarrow\boolean{P})$ which we will denote by $A1$.\\

\noindent \begin{itemize}
\item $\mathcal{S}$-\godel \tnorm.
\begin{align*}
    [A1]&= \max\Big(1-[\boolean{P}], [\boolean{Q}\rightarrow\boolean{P}]\Big)\\
    &=\max\Big(1-p,\max(1-q, p)\Big)\\ 
\end{align*}
The consistency is given by
\begin{align*}
    & \kappa^{\mathcal{S}\text{-\godel}}(A1) \\ 
     &\quad=\int_0^1\int_0^1 \max\Big(1-p,\max(1-q, p)\Big)\hspace{.1cm} \mathrm{d}p\hspace{0.05cm}\mathrm{d}q \\
     &\quad=\frac{19}{24}\approx 0.79 
\end{align*}

\item $\mathcal{S}$-Product \tnorm.
\begin{align*}
    [A1]&= 1-[\boolean{P}]+([\boolean{P}]\cdot[\boolean{Q}\rightarrow\boolean{P}])\\
    &=1-p+(p\cdot(1-q+q\cdot p))\\ 
\end{align*}

The consistency is given by
\begin{align*}
    & \kappa^{\mathcal{S}\text{-Product}}(A1) \\ 
     &\quad=\int_0^1\int_0^1 1-p+(p\cdot(1-q+q\cdot p))\hspace{.1cm} \mathrm{d}p\hspace{0.05cm}\mathrm{d}q \\
     &\quad=\frac{11}{12}\approx0.92 
\end{align*}

\item $\mathcal{R}$-Product \tnorm.

$$[A1]=\left\{
        \begin{array}{ll}
            1 & \text{if}\quad [\boolean{P}]\leq[\boolean{Q}\rightarrow\boolean{P}] \\
            \frac{[\boolean{Q}\rightarrow\boolean{P}]}{[\boolean{P}]} & \quad\text{else}
        \end{array}
    \right.\\$$

We have two cases:
\begin{enumerate}
\item $[\boolean{P}]< [\boolean{Q}]$ \\

By the definition of $\mathcal{R}$-Product implication, 

$$[\boolean{\boolean{Q}}\rightarrow\boolean{P}]=\frac{p}{q}$$
then, 
$$[A1]=\left\{
        \begin{array}{ll}
            1 & \text{if}\quad p\leq\frac{p}{q} \\
            \frac{1}{q} & \quad\text{else}
        \end{array}
    \right.\\$$
but notice that $p\not>\frac{p}{q}$ because $0<q<1$ by definition. Therefore, $[A1]=1$.

\item $[\boolean{P}]\geq[\boolean{Q}]$ \\

By the definition of $\mathcal{R}$-Product implication,

$$[\boolean{\boolean{Q}}\rightarrow\boolean{P}]=1$$
then, by the definition of $[A1]$ under $\mathcal{R}$-Product and the fact that $0<[\boolean{P}]<1$ we obtain that $[A1]=1$.
\end{enumerate}
\vspace{0.3cm}
\item\luka \tnorm.

\begin{align*}
    [A1]&= \min\Big(1, 1-[\boolean{P}]+\min\big(1, 1-[\boolean{Q}]+[\boolean{P}]\big)\Big)\\
    &=\min\Big(1, 1-p+\min(1-q+p)\Big)\\ 
\end{align*}

The consistency is given by
\begin{align*}
    & \kappa^{\mathcal{S}\text{-\luka}}(A1) \\ 
     &\quad=\int_0^1\int_0^1 \min\Big(1, 1-p+\min(1-q+p)\Big)\hspace{.1cm} \,\mathrm{d}p\hspace{0.05cm}\,\mathrm{d}q \\
     &\quad=1 
\end{align*}

\end{itemize}

\section{Additional Experiments and Results}

\subsection{Recognizing Digits and Arithmetic Operations}

For the same set of experiments described in~\cref{subsec:digits}, tables~\ref{tab:joint_digit_1000},~\ref{tab:joint_ope_1000},~\ref{tab:joint_coherence_1000},~\ref{tab:joint_digit_25000},~\ref{tab:joint_ope_25000},~\ref{tab:joint_coherence_25000} report the results from instantiating the sum and product coherence constraints over 1k and 25k PAIR examples,respectively, to obtain the training signal for the \boolean{Sum} and \boolean{Prod} classifiers.

\begin{table}[htb!]
    \centering
    \begin{tabular}{rccc}
    \toprule
         & 1000 & 5000 & 25000 \\
    \midrule
     $\mathcal{S}$-\godel & 94.5 {\tiny (0.3)} & 97.0 {\tiny (0.2)} & 96.7 {\tiny (0.1)} \\
     $\mathcal{S}$-Product & 92.5 {\tiny (0.1)} & 97.8 {\tiny (0.0)} & \textbf{99.0} {\tiny (0.1)}\\
     $\mathcal{R}$-Product & \textbf{95.5} {\tiny (0.1)} & \textbf{97.9} {\tiny (0.0)} & 96.0  {\tiny (0.0)}\\
     \luka & 95.1 {\tiny (0.4)} & 96.2 {\tiny (0.2)} & 97.9 {\tiny (0.3)} \\
     \bottomrule
    \end{tabular}
    \caption{\boolean{Digit} accuracies (and standard deviations) from jointly training \boolean{Digit} on DIGIT sizes 1k, 5k, 25k and operators (\boolean{Sum},\boolean{Prod}) on PAIR size 1k.}
    \label{tab:joint_digit_1000}
\end{table}

\begin{table}[htb!]
    \centering
    \begin{tabular}{rccc}
    \toprule
         & 1000 & 5000 & 25000 \\
    \midrule
     $\mathcal{S}$-\godel & \textbf{62.7} {\tiny(0.9)} & \textbf{62.0} {\tiny(0.3)} & 64.1 {\tiny(0.2)} \\
     $\mathcal{S}$-Product & 49.6 {\tiny(1.1)} & 59.1 {\tiny(1.0)} & 60.6 {\tiny(0.3)} \\
     $\mathcal{R}$-Product & 53.3 {\tiny(0.7)} & 59.3 {\tiny(0.5)} & \textbf{66.5} {\tiny(0.2)}\\
     \luka & 44.5 {\tiny(4.2)} & 45.3 {\tiny(3.1)} & 50.8 {\tiny(0.7)}\\
     \bottomrule
    \end{tabular}
    \caption{Average of \boolean{Sum} and \boolean{Product} accuracies (and standard deviations) from jointly training \boolean{Digit} on DIGIT sizes 1k, 5k, 25k and operators on PAIR size 1k.}
    \label{tab:joint_ope_1000}
\end{table}
\begin{table}[htb!]
    \centering
    \begin{tabular}{rccc}
    \toprule
         & 1000 & 5000 & 25000 \\
    \midrule
     $\mathcal{S}$-\godel & \textbf{64.1} {\tiny(0.9)} & \textbf{62.1} {\tiny(0.2)} & 63.9 {\tiny(0.2)}\\
     $\mathcal{S}$-Product & 52.1 {\tiny(1.3)} & 59.4 {\tiny(1.2)} & 60.7 {\tiny(0.5)}\\
     $\mathcal{R}$-Product & 54.4 {\tiny(0.5)} & 59.1 {\tiny(0.2)} & \textbf{67.0} {\tiny(0.0)} \\
     \luka & 45.4 {\tiny(4.5)} & 45.8 {\tiny(2.9)} & 50.9 {\tiny(1.0)}\\
     \bottomrule
    \end{tabular}
    \caption{Average of \boolean{Sum} and \boolean{Prod} Coherence accuracies (and standard deviations) from jointly training \boolean{Digit} on DIGIT sizes 1k, 5k, 25k and operators on PAIR size 1k.}
    \label{tab:joint_coherence_1000}
\end{table}

\begin{table}[htb!]
    \centering
    \begin{tabular}{rccc}
    \toprule
         & 1000 & 5000 & 25000 \\
    \midrule
     $\mathcal{S}$-\godel & 93.5 {\tiny(0.2)} & 97.7 {\tiny(0.1)} & 98.0 {\tiny(0.0)}\\
     $\mathcal{S}$-Product & 95.2 {\tiny(0.0)} & 98.2 {\tiny(0.0)}& 99.0 {\tiny(0.0)}\\
     $\mathcal{R}$-Product & \textbf{96.2} {\tiny(0.0)} & \textbf{98.4} {\tiny(0.0)} & 99.0 {\tiny(0.0)}\\
     \luka & 94.8 {\tiny(0.1)} & 98.2 {\tiny(0.1)} & \textbf{99.0} {\tiny(0.0)}\\
     \bottomrule
    \end{tabular}
    \caption{\boolean{Digit} accuracies (and standard deviations) from jointly training \boolean{Digit} on DIGIT sizes 1k, 5k, 25k and operators (\boolean{Sum},\boolean{Prod}) on PAIR size 25k.}
    \label{tab:joint_digit_25000}
\end{table}
\begin{table}[htb!]
    \centering
    \begin{tabular}{rccc}
    \toprule
         & 1000 & 5000 & 25000 \\
    \midrule
     $\mathcal{S}$-\godel & 88.3 {\tiny (0.5)} & 94.9 {\tiny (0.2)} & 95.9 {\tiny (0.0)}\\
     $\mathcal{S}$-Product & 83.6 {\tiny (0.5)} & 90.9 {\tiny (0.5)} & 95.0 {\tiny (0.0)}\\
     $\mathcal{R}$-Product & \textbf{89.6} {\tiny (0.2)} & \textbf{95.7} {\tiny (0.1)} & \textbf{96.1} {\tiny (0.1)} \\
     \luka & 70.8 {\tiny (3.0)} & 89.4 {\tiny (1.4)} & 90.4 {\tiny (0.4)} \\
     \bottomrule
    \end{tabular}
    \caption{Average of \boolean{Sum} and \boolean{Product} accuracies (and standard deviations) from jointly training \boolean{Digit} on DIGIT sizes 1k, 5k, 25k and operators on PAIR size 25k.}
    \label{tab:joint_ope_25000}
\end{table}
\begin{table}[htb!]
    \centering
    \begin{tabular}{rccc}
    \toprule
         & 1000 & 5000 & 25000 \\
    \midrule
     $\mathcal{S}$-\godel & 86.0 {\tiny(0.4)} & 94.6 {\tiny(0.2)} & 95.4 {\tiny(0.1)} \\
     $\mathcal{S}$-Product & 85.8 {\tiny(0.5)} & 91.7 {\tiny(0.4)} & 95.6 {\tiny(0.1)} \\
     $\mathcal{R}$-Product & \textbf{90.8} {\tiny(0.1)} & \textbf{95.9} {\tiny(0.0)}  & \textbf{96.3} {\tiny(0.0)}\\
     \luka & 72.9 {\tiny(3.1)} & 90.2 {\tiny(1.2)} & 90.7 {\tiny(0.4)} \\
     \bottomrule
    \end{tabular}
    \caption{Average of \boolean{Sum} and \boolean{Prod} Coherence accuracies (and standard deviations) from jointly training \boolean{Digit} on DIGIT sizes 1k, 5k, 25k and operators on PAIR size 25k.}
    \label{tab:joint_coherence_25000}
\end{table}

Similarly, tables~\ref{tab:supervised_ope_1000},~\ref{tab:supervised_coherence_1000},~\ref{tab:supervised_ope_25000},~\ref{tab:supervised_coherence_25000} report the results from assigning noisy labels to 1k and 25k unlabeled PAIR data examples to train the \boolean{Sum} and \boolean{Prod} classifiers independently following the pipeline strategy we use in~\cref{subsec:digits}.

\begin{table}[htb!]
    \centering
    \begin{tabular}{rccc}
    \toprule
         & 1000 & 5000 & 25000 \\
    \midrule
     $\mathcal{S}$-\godel & 60.7 {\tiny (0.6)} & \textbf{63.0}  {\tiny (0.4)} & 62.9  {\tiny (0.2)}\\
     $\mathcal{R}/\mathcal{S}$-Product & \textbf{61.5} {\tiny (0.2)} & 60.9 {\tiny (0.3)} & \textbf{62.5} {\tiny (0.1)}\\
     \luka & 54.9 {\tiny (3.1)} & 55.5 {\tiny (6.7)} & 55.1 {\tiny (0.3)}\\
     \bottomrule
    \end{tabular}
    \caption{Average of Pipelined \boolean{Sum} and \boolean{Product} accuracies (and standard deviations) trained on PAIR size 1k (noisy) labeled with \texttt{Digit} model trained on DIGIT sizes 1k, 5k, 25k.}.
    \label{tab:supervised_ope_1000}
\end{table}
\begin{table}[htb!]
    \centering
    \begin{tabular}{rccc}
    \toprule
         & 1000 & 5000 & 25000 \\
    \midrule
     $\mathcal{S}$-\godel & 60.7 {\tiny (0.4)} & \textbf{63.5} {\tiny (0.3)} & 62.3 {\tiny (0.4)} \\
     $\mathcal{R}/\mathcal{S}$-Product & \textbf{62.1} {\tiny (0.4)} & 61.2 {\tiny (0.3)} & \textbf{62.6} {\tiny (0.3)}\\
     \luka & 55.6 {\tiny (2.4)} & 55.7 {\tiny (3.4)} & 55.6 {\tiny (1.0)} \\
     \bottomrule
    \end{tabular}
    \caption{Average of Pipelined \boolean{Sum} and \boolean{Prod} Coherence accuracies (and standard deviations) trained on PAIR size 1k (noisy) labeled with \boolean{Digit} model trained on DIGIT sizes 1k, 5k, 25k.}
    \label{tab:supervised_coherence_1000}
\end{table}
\begin{table}[htb!]
    \centering
    \begin{tabular}{rccc}
    \toprule
         & 1000 & 5000 & 25000 \\
    \midrule
     $\mathcal{S}$-\godel & 90.5 {\tiny (0.1)} & 94.5 {\tiny (0.0)} & 95.4 {\tiny (0.0)}\\
     $\mathcal{R}/\mathcal{S}$-Product & \textbf{91.1} {\tiny (0.2)} & \textbf{95.3} {\tiny (0.1)} & \textbf{96.9} {\tiny (0.0)}\\
     \luka & 85.9 {\tiny (1.4)} & 93.3 {\tiny (1.9)} & 95.4 {\tiny (0.3)}\\
     \bottomrule
    \end{tabular}
    \caption{Average of Pipelined \boolean{Sum} and \boolean{Product} accuracies (and standard deviations) trained on PAIR size 25k (noisy) labeled with \texttt{Digit} model trained on DIGIT sizes 1k, 5k, 25k.}.
    \label{tab:supervised_ope_25000}
\end{table}
\begin{table}[htb!]
    \centering
    \begin{tabular}{rccc}
    \toprule
         & 1000 & 5000 & 25000 \\
    \midrule
     $\mathcal{S}$-\godel & 88.5 {\tiny (0.2)} & 93.8 {\tiny (0.2)} & 94.8 {\tiny (0.0)}\\
     $\mathcal{R}/\mathcal{S}$-Product & \textbf{91.0} {\tiny (0.1)} & \textbf{95.3} {\tiny (0.1)} & \textbf{96.9} {\tiny (0.0)}\\
     \luka & 85.8 {\tiny (1.5)} & 93.3 {\tiny (1.5)} & 95.4 {\tiny (0.3)} \\
     \bottomrule
    \end{tabular}
    \caption{Average of Pipelined \boolean{Sum} and \boolean{Prod} Coherence accuracies (and standard deviations) trained on PAIR size 25k (noisy) labeled with \boolean{Digit} model trained on DIGIT sizes 1k, 5k, 25k.}
    \label{tab:supervised_coherence_25000}
\end{table}

For all these different data configurations, we observed a similar trend in the results to the one observed in~\cref{subsec:digits} with $\mathcal{R}$-Product achieving the highest performance among the different relaxations.

\subsubsection{Arithmetic Properties}

We also evaluate the resulting models from both of the training regimes (joint and pipelines) on arithmetic properties they are not being trained for. Specifically, we create test sets to evaluate if the commutativity, associativity, and distributivity properties are satisfied by the operator classifiers.

To evaluate commutativity, we check how many times an operator classifier predicts the same digit for pairs of TEST images and their reverse order. 

To evaluate the associativity property 
$$\forall
x_1,x_2,x_3 \quad (x_1+x_2)+x_3=x_1+(x_2+x_3)$$
in a operator classifier, say the \boolean{Sum} (mod 10) operator, we use triples of different images from TEST. For each triple $x_1$, $x_2$, $x_3$, we consider the predicted digits $y_{1,2}$ and $y_{2,3}$ corresponding to the \boolean{Sum} of $x_1$ and $x_2$ ($x_1+x_2$), and the \boolean{Sum} of $x_2$ and $x_3$ ($x_2+x_3$) respectively. Then, we randomly sample an image $x_{1,2}$ of a $y_{1,2}$, and an image $x_{2,3}$ of a $y_{2,3}$ from TEST, and check whether the respective predicted \boolean{Sum} of $x_{1,2}$ and $x_3$ ($x_{1,2}+x_3$) and \boolean{Sum} of $x_1$ and $x_{2,3}$ ($x_1+x_{2,3}$) agree. To compensate to the random choice of the image in TEST representing the predicted associations in the property, we repeat the process for 6 iterations and consider the most frequent resulting digit on each side of the equation respectively to make the final comparison. 

We follow an analogous approach to evaluate the distributivity property which involves predictions from both \boolean{Sum} and \boolean{Prod} classifiers.

In tables~\ref{tab:joint_properties_1000},~\ref{tab:joint_properties_5000},~\ref{tab:joint_properties_25000},~\ref{tab:supervised_properties_1000},~\ref{tab:supervised_properties_5k},~\ref{tab:supervised_properties_25k}, the first two columns respectively show the average for \boolean{Sum} and \boolean{Prod} classifiers of the fraction of test instances where the commutativity and associativity properties are satisfied. Similarly, the last column shows the average of the fraction of examples satisfying the left and right distributivity properties.
In these experiments, we observe that \godel and $\mathcal{R}$-Product are the best relaxations. On the other hand, \luka \tnorm performance was considerably poor with high standard deviations.

\begin{table}[htb!]
    \centering
    \begin{tabular}{rrccc}
    \toprule
         & & Commut. & Assoc. & Dist.  \\
         \midrule
         1000  &  $\mathcal{S}$-\godel & \textbf{59.1} {\tiny(3.4)} & \textbf{47.6} {\tiny(2.4)} & \textbf{43.3} {\tiny(4.1)}\\
         & $\mathcal{S}$-Product & 55.8 {\tiny(1.4)} & 45.6 {\tiny(1.9)} & 39.7 {\tiny(2.5)} \\
         & $\mathcal{R}$-Product & 49.4 {\tiny(1.8)} & 41.4 {\tiny(1.2)} & 32.1 {\tiny(1.9)} \\
         & \luka & 43.0 {\tiny(5.4)} & 37.6 {\tiny(5.2)} & 27.2 {\tiny(4.8)}\\
         \midrule
        5000 & $\mathcal{S}$-\godel & \textbf{60.0} {\tiny(2.6)} & 48.1 {\tiny(2.0)} & \textbf{41.8} {\tiny(2.9)}\\
        & $\mathcal{S}$-Product & 54.2 {\tiny(1.2)} & 43.5 {\tiny(1.2)} & 37.2 {\tiny(1.9)} \\
        & $\mathcal{R}$-Product & 54.0 {\tiny(1.1)} & \textbf{48.9} {\tiny(1.0)} & 41.4 {\tiny(1.2)} \\
        & \luka & 43.8 {\tiny(6.0)} & 39.9 {\tiny(5.7)} & 31.1 {\tiny(3.7)} \\
        \midrule
        25000 & $\mathcal{S}$-\godel & 62.7 {\tiny(0.5)} & 50.0 {\tiny(1.0)} & 42.7 {\tiny(0.9)} \\
        & $\mathcal{S}$-Product & 52.8 {\tiny(0.4)} & 44.7 {\tiny(0.3)} & 39.0 {\tiny(0.4)} \\
        & $\mathcal{R}$-Product & \textbf{63.0} {\tiny(0.2)} & \textbf{51.8} {\tiny(0.3)} & \textbf{44.7} {\tiny(0.3)} \\
        & \luka & 52.2 {\tiny(4.2)} & 44.9 {\tiny(4.6)} & 36.8 {\tiny(3.6)} \\
         \bottomrule
    \end{tabular}
    \caption{Arithmetic properties accuracies (and standard deviations) from jointly training \boolean{Digit} on DIGIT sizes 1k, 5k, 25k and operators (\boolean{Sum},\boolean{Prod}) on PAIR size 1k.}
    \label{tab:joint_properties_1000}
\end{table}
\begin{table}[htb!]
    \centering
    \begin{tabular}{rrccc}
    \toprule
         & & Commut. & Assoc. & Dist.  \\
         \midrule
         1000  &  $\mathcal{S}$-\godel & 90.0 {\tiny(0.3)} & 85.8 {\tiny(0.3)} & \textbf{82.6} {\tiny(0.3)}\\
         & $\mathcal{S}$-Product & 79.5 {\tiny(1.0)} & 69.7 {\tiny(0.7)} & 66.5 {\tiny(1.2)} \\
         & $\mathcal{R}$-Product & \textbf{91.0} {\tiny(0.8)} & \textbf{87.0} {\tiny(0.9)} & 82.0 {\tiny(0.5)} \\
         & \luka & 76.5 {\tiny(2.6)} & 65.7 {\tiny(4.1)} & 64.3 {\tiny(5.7)} \\
         \midrule
        5000 & $\mathcal{S}$-\godel & \textbf{91.8} {\tiny(0.0)} & \textbf{89.2} {\tiny(0.2)} & \textbf{86.8} {\tiny(0.1)} \\
        & $\mathcal{S}$-Product & 89.4 {\tiny(0.5)} & 85.6 {\tiny(0.4)} & 83.0 {\tiny(0.5)}\\
        & $\mathcal{R}$-Product & 90.9 {\tiny(0.5)} & 88.4 {\tiny(0.6)} & 83.9 {\tiny(0.1)} \\
        & \luka & 85.3 {\tiny(1.8)} & 80.2 {\tiny(2.9)} & 74.7 {\tiny(4.5)} \\
        \midrule
        25000 & $\mathcal{S}$-\godel & \textbf{91.9} {\tiny(0.0)} & \textbf{89.4} {\tiny(0.1)} & \textbf{87.2} {\tiny(0.0)} \\
        & $\mathcal{S}$-Product & 90.1 {\tiny(0.1)} & 86.8 {\tiny(0.0)} & 84.3 {\tiny(0.1)} \\
        & $\mathcal{R}$-Product & 91.4 {\tiny(0.1)} & 89.3 {\tiny(0.2)} & 84.1 {\tiny(0.0)} \\
        & \luka & 86.2 {\tiny(0.6)} & 81.2 {\tiny(0.7)} & 67.7 {\tiny(4.0)} \\
         \bottomrule
    \end{tabular}
    \caption{Arithmetic properties accuracies (and standard deviations) from jointly training \boolean{Digit} on DIGIT sizes 1k, 5k, 25k and operators (\boolean{Sum},\boolean{Prod}) on PAIR size 5k.}
    \label{tab:joint_properties_5000}
\end{table}
\begin{table}[htb!]
    \centering
    \begin{tabular}{rrccc}
    \toprule
         & & Commut. & Assoc. & Dist.  \\
         \midrule
         1000  &  $\mathcal{S}$-\godel & 93.9 {\tiny(0.1)} & 89.5 {\tiny(0.1)} & 85.7 {\tiny(0.0)}\\
         & $\mathcal{S}$-Product & 89.8 {\tiny(0.4)} & 82.2 {\tiny(0.2)} & 79.0 {\tiny(0.6)} \\
         & $\mathcal{R}$-Product & \textbf{95.3} {\tiny(0.3)} & \textbf{91.8} {\tiny(0.4)} & \textbf{86.3} {\tiny(0.3)} \\
         & \luka & 79.9 {\tiny(2.5)} & 63.3 {\tiny(4.9)} & 55.7 {\tiny(6.9)} \\
         \midrule
        5000 & $\mathcal{S}$-\godel & 96.6 {\tiny(0.0)} & 95.0 {\tiny(0.0)} & 92.2 {\tiny(0.0)} \\
        & $\mathcal{S}$-Product & 92.3 {\tiny(0.1)} & 88.8 {\tiny(0.2)} & 86.5 {\tiny(0.2)} \\
        & $\mathcal{R}$-Product & \textbf{97.5} {\tiny(0.2)} & \textbf{96.4} {\tiny(0.0)} & \textbf{93.8} {\tiny(0.0)}\\
        & \luka & 95.5 {\tiny(0.3)} & 93.7 {\tiny(0.3)} & 83.6 {\tiny(0.4)} \\
        \midrule
        25000 & $\mathcal{S}$-\godel & 96.7 {\tiny(0.0)} & 95.6 {\tiny(0.0)} & \textbf{94.5} {\tiny(0.0)} \\
        & $\mathcal{S}$-Product & 96.4 {\tiny(0.2)} & 94.9 {\tiny(0.1)} & 93.6 {\tiny(0.2)} \\
        & $\mathcal{R}$-Product & \textbf{97.5} {\tiny(0.0)} & \textbf{96.5} {\tiny(0.2)} & 94.1 {\tiny(0.1)}\\
        & \luka & 94.9 {\tiny(0.3)} & 91.5 {\tiny(0.1)} & 83.2 {\tiny(0.3)} \\
         \bottomrule
    \end{tabular}
    \caption{Arithmetic properties accuracies (and standard deviations) from jointly training \boolean{Digit} on DIGIT sizes 1k, 5k, 25k and operators (\boolean{Sum},\boolean{Prod}) on PAIR size 25k.}
    \label{tab:joint_properties_25000}
\end{table}
\begin{table}[htb!]
    \centering
    \begin{tabular}{rrccc}
    \toprule
         & & Commut. & Assoc. & Dist.  \\
         \midrule
         1000  &  $\mathcal{S}$-\godel & 56.4 {\tiny (0.9)} & 44.7 {\tiny (0.9)} & 33.9 {\tiny (1.4)}\\
         & $\mathcal{R}/\mathcal{S}$-Product & \textbf{56.6} {\tiny (1.2)} & \textbf{46.6} {\tiny (0.8)} & \textbf{39.1} {\tiny (0.9)}\\
         & \luka & 55.0 {\tiny (2.8)} & 43.8 {\tiny (9.5)} & 21.5 {\tiny (8.5)}\\
         \midrule
        5000 & $\mathcal{S}$-\godel & \textbf{55.9} {\tiny (0.5)} & \textbf{46.1} {\tiny (0.4)} & \textbf{39.9} {\tiny (0.6)}\\
        & $\mathcal{R}/\mathcal{S}$-Product & 54.3 {\tiny (0.3)} & 44.6 {\tiny (0.4)} & 36.8 {\tiny (0.3)}\\
        & \luka & 48.7 {\tiny (2.3)} & 74.1 {\tiny (9.1)} & 27.8 {\tiny (8.8)} \\
        \midrule
        25000 & $\mathcal{S}$-\godel & 54.4 {\tiny (0.2)} & \textbf{47.0} {\tiny (0.3)} & \textbf{47.2} {\tiny (0.3)}\\
        & $\mathcal{R}/\mathcal{S}$-Product & \textbf{54.9} {\tiny (0.3)} & 
        45.2 {\tiny (0.3)} & 35.2 {\tiny (0.2)}\\
        & \luka & 52.9 {\tiny (1.3)} & 44.5 {\tiny (7.1)} & 30.3 {\tiny (7.5)} \\
         \bottomrule
    \end{tabular}
    \caption{\boolean{Sum} and \boolean{Product} arithmetic properties accuracies averages (and standard deviations) from pipeline training on PAIR size 1k (noisy) labeled with \boolean{Digit} model trained on DIGIT sizes 1k, 5k, 25k.}
    \label{tab:supervised_properties_1000}
\end{table}
\begin{table}[htb!]
    \centering
    \begin{tabular}{rrccc}
    \toprule
         & & Commut. & Assoc. & Dist.  \\
         \midrule
         1000  &  $\mathcal{S}$-\godel & 
         90.0 {\tiny (0.2)} & 86.0 {\tiny (0.2)} & 82.4 {\tiny (0.5)}\\
         & $\mathcal{R}/\mathcal{S}$-Product &  \textbf{92.4} {\tiny (0.3)} & \textbf{88.9} {\tiny (0.3)} & \textbf{85.1} {\tiny (0.2)}\\
         & \luka & 85.4 {\tiny (1.9)} & 74.1 {\tiny (6.9)} & 58.6 {\tiny (7.5)} \\
         \midrule
        5000 & $\mathcal{S}$-\godel & 90.6 {\tiny (0.0)} & 88.0 {\tiny (0.1)} & 85.7 {\tiny (0.3)} \\
        & $\mathcal{R}/\mathcal{S}$-Product & \textbf{92.4} {\tiny (0.0)} & \textbf{90.1} {\tiny (0.0)} & \textbf{87.5} {\tiny (0.1)}\\
        & \luka & 88.8 {\tiny (1.2)} & 80.1 {\tiny (5.3)} & 76.5 {\tiny (6.1)} \\
        \midrule
        25000 & $\mathcal{S}$-\godel & 90.5 {\tiny (0.0)} & 88.1 {\tiny (0.0)} & 85.6 {\tiny (0.2)} \\
        & $\mathcal{R}/\mathcal{S}$-Product & \textbf{92.4} {\tiny (0.0)} & 
        \textbf{90.4} {\tiny (0.0)} & \textbf{87.9} {\tiny (0.0)} \\
        & \luka & 87.5 {\tiny (0.8)} & 80.5 {\tiny (6.0)} & 78.4 {\tiny (5.4)} \\
         \bottomrule
    \end{tabular}
    \caption{\boolean{Sum} and \boolean{Product} arithmetic properties accuracies averages (and standard deviations) from pipeline training on PAIR size 5k (noisy) labeled with \boolean{Digit} model trained on DIGIT sizes 1k, 5k, 25k.}
    \label{tab:supervised_properties_5k}
\end{table}
\begin{table}[htb!]
    \centering
    \begin{tabular}{rrccc}
    \toprule
         & & Commut. & Assoc. & Dist.  \\
         \midrule
         1000  &  $\mathcal{S}$-\godel & 94.4 {\tiny (0.0)} & 91.0 {\tiny (0.0)} & 87.2 {\tiny (0.0)}\\
         & $\mathcal{R}/\mathcal{S}$-Product & \textbf{95.5} {\tiny (0.0)} & \textbf{91.9} {\tiny (0.0)} & \textbf{87.6} {\tiny (0.0)}\\
         & \luka & 92.9 {\tiny (0.5)} & 79.9 {\tiny (6.6)} & 74.7 {\tiny (5.0)}\\
         \midrule
        5000 & $\mathcal{S}$-\godel & 95.9 {\tiny (0.0)} & 94.2 {\tiny (0.0)} & 93.0 {\tiny (0.0)} \\
        & $\mathcal{R}/\mathcal{S}$-Product & \textbf{96.8} {\tiny (0.0)} & \textbf{95.5} {\tiny (0.0)} & \textbf{93.3} {\tiny (0.0)}\\
        & \luka & 94.3 {\tiny (1.0)} & 90.0 {\tiny (3.1)} & 89.7 {\tiny (2.8)} \\
        \midrule
        25000 & $\mathcal{S}$-\godel & 96.0 {\tiny (0.0)} & 94.7 {\tiny (0.0)} & 93.5 {\tiny (0.0)}\\
        & $\mathcal{R}/\mathcal{S}$-Product & \textbf{97.4} {\tiny (0.0)} & 
        \textbf{96.8} {\tiny (0.0)} & \textbf{95.3} {\tiny (0.0)}\\
        & \luka & 96.0 {\tiny (0.2)} & 94.7 {\tiny (0.9)} & 92.9 {\tiny (3.1)} \\
         \bottomrule
    \end{tabular}
    \caption{\boolean{Sum} and \boolean{Product} arithmetic properties accuracies averages (and standard deviations) from pipeline training on PAIR size 25k (noisy) labeled with \boolean{Digit} model trained on DIGIT sizes 1k, 5k, 25k.}
    \label{tab:supervised_properties_25k}
\end{table}

\section{Full Tautologies Table and Consistencies}

Table~\ref{tab:big_tautotable} shows the entire set of tautologies we considered for the evaluation of consistencies across different relaxations. We observe that the top three relaxations preserving truth (entries equal to 1) are $\mathcal{R}$-\godel,\luka and $\mathcal{R}$-Product, however, $\mathcal{R}$-\godel does not satisfy P1 as its definition of implication is not a sub-differentiable function (see~\S\ref{consistency_subsection}).

\begin{table*}[ht]
    \centering
    \setlength{\tabcolsep}{1.1pt}
    \begin{tabular}{l|ccccc}\toprule
      Tautologies & \textbf{$\mathcal{S}$-Prod.} & \textbf{$\mathcal{S}$-\godel} & \textbf{\L{}uka.} & \textbf{$\mathcal{R}$-Pro.} & 
      \textbf{$\mathcal{R}$-\godel}\\ \midrule
    \begin{tabular}{@{}l@{}}\textbf{Axiom Schemata} \\ \quad$\boolean{P}\rightarrow (\boolean{Q}\rightarrow \boolean{P})$\end{tabular}          & \begin{tabular}{@{}c@{}}\\ 0.92\end{tabular}                    & \begin{tabular}{@{}c@{}}\\ 0.79\end{tabular} & \begin{tabular}{@{}c@{}}\\ 1\end{tabular} & \begin{tabular}{@{}c@{}}\\ 1\end{tabular} & \begin{tabular}{@{}c@{}}\\ 1\end{tabular} \\ 
	\quad$(\boolean{P}\rightarrow(\boolean{Q}\rightarrow \boolean{R}))\rightarrow((\boolean{P}\rightarrow \boolean{Q})\rightarrow(\boolean{P}\rightarrow \boolean{R}))$	  & 0.88		    & 0.75		& 0.96	 & 0.93	& 1	  \\ 
	\quad$(\neg \boolean{P}\rightarrow\neg \boolean{Q})\rightarrow(\boolean{Q}\rightarrow \boolean{P})$		  & 0.86		    & 0.75& 1	 & 0.88 & 0.79 \\ 
	\midrule
	\begin{tabular}{@{}l@{}}\textbf{Primitive Propositions} \\ \quad$(\boolean{P}\vee \boolean{P})\rightarrow \boolean{P}$\end{tabular}		  & \begin{tabular}{@{}c@{}}\\ 0.75\end{tabular}	& \begin{tabular}{@{}c@{}}\\ 0.75\end{tabular}	& \begin{tabular}{@{}c@{}}\\ 0.75\end{tabular}	 & \begin{tabular}{@{}c@{}}\\ 0.69\end{tabular} & \begin{tabular}{@{}c@{}}\\ 1\end{tabular}	  \\ 
	\quad$\boolean{Q}\rightarrow(\boolean{P}\vee \boolean{Q})$  & 0.92		    & 0.79	& 1	 & 1 & 1 \\ 
	\quad$(\boolean{P}\vee \boolean{Q})\rightarrow(\boolean{Q}\vee \boolean{P})$	 & 0.86	& 0.75		& 1	 & 1& 1	\\ 
	\quad$(\boolean{P}\vee(\boolean{Q}\vee \boolean{R} ))\rightarrow(\boolean{Q}\vee(\boolean{P}\vee \boolean{R}))$	  & 0.91	& 0.78	& 1	 & 1 & 1\\ 
	\quad$(\boolean{Q}\rightarrow \boolean{R})\rightarrow((\boolean{P}\vee \boolean{Q})\rightarrow(\boolean{P}\vee \boolean{R}))$ & 0.90		    & 0.76 & 1	 & 1 & 1 \\
	\midrule
	\begin{tabular}{@{}l@{}}\textbf{Law of excluded middle} \\ \quad$\boolean{P}\vee\neg \boolean{P}$\end{tabular}	& \begin{tabular}{@{}c@{}}\\ 0.83\end{tabular}    & \begin{tabular}{@{}c@{}}\\ 0.75\end{tabular}& \begin{tabular}{@{}c@{}}\\ 1\end{tabular} & \begin{tabular}{@{}c@{}}\\ 0.83\end{tabular}	& \begin{tabular}{@{}c@{}}\\ 0.75\end{tabular}  \\ 
	\midrule
	\begin{tabular}{@{}l@{}}\textbf{Law of contradiction} \\ \quad$\neg(\boolean{P}\wedge\neg \boolean{P})$\end{tabular}			  & \begin{tabular}{@{}c@{}}\\ 0.83\end{tabular}	& \begin{tabular}{@{}c@{}}\\ 0.75\end{tabular}	& \begin{tabular}{@{}c@{}}\\ 1\end{tabular} & \begin{tabular}{@{}c@{}}\\ 0.83\end{tabular}	& \begin{tabular}{@{}c@{}}\\ 0.75\end{tabular}  \\ 
	\begin{tabular}{@{}l@{}}\textbf{Law of double negation} \\ \quad$\boolean{P}\leftrightarrow\neg(\neg \boolean{P})$\end{tabular}  &\begin{tabular}{@{}c@{}}\\ 0.70\end{tabular}   & \begin{tabular}{@{}c@{}}\\ 0.75\end{tabular} 	& \begin{tabular}{@{}c@{}}\\ 1\end{tabular} 	 & \begin{tabular}{@{}c@{}}\\ 1\end{tabular} & \begin{tabular}{@{}c@{}}\\ 1\end{tabular}  \\ 
	\midrule
	\begin{tabular}{@{}l@{}}\textbf{Principles of transposition} \\ \quad$(\boolean{P}\leftrightarrow \boolean{Q})\leftrightarrow (\neg \boolean{P}\leftrightarrow\neg \boolean{Q})$\end{tabular}   & \begin{tabular}{@{}c@{}}\\ 0.61\end{tabular}   & \begin{tabular}{@{}c@{}}\\ 0.67\end{tabular} & \begin{tabular}{@{}c@{}}\\ 1\end{tabular}  & \begin{tabular}{@{}c@{}}\\ 0.59\end{tabular} & \begin{tabular}{@{}c@{}}\\ 0.17\end{tabular} 	  \\ 
	\quad$((\boolean{P}\wedge \boolean{Q})\rightarrow \boolean{R})\leftrightarrow((\boolean{P}\wedge\neg \boolean{R})\rightarrow\neg \boolean{Q})$ & 0.84   & 0.78	& 1	 & 0.86 & 0.65\\ 
	\midrule
	\begin{tabular}{@{}l@{}}\textbf{Laws of tautology} \\ \quad$\boolean{P}\leftrightarrow (\boolean{P}\wedge \boolean{P})$\end{tabular}  & \begin{tabular}{@{}c@{}}\\ 0.69\end{tabular}   & \begin{tabular}{@{}c@{}}\\ 0.75\end{tabular}& \begin{tabular}{@{}c@{}}\\ 0.75 \end{tabular}& \begin{tabular}{@{}c@{}}\\ 0.5\end{tabular} & \begin{tabular}{@{}c@{}}\\ 1\end{tabular}	  \\ 
	\quad$\boolean{P}\leftrightarrow (\boolean{P}\vee \boolean{P})$  & 0.69 & 0.75	& 0.75	 & 0.69 & 1  \\ 
	\begin{tabular}{@{}l@{}}\textbf{Laws of absorption} \\ \quad$(\boolean{P}\rightarrow \boolean{Q})\leftrightarrow(\boolean{P}\leftrightarrow(\boolean{P}\wedge \boolean{Q}))$\end{tabular}   & \begin{tabular}{@{}c@{}}\\ 0.66\end{tabular}  & \begin{tabular}{@{}c@{}}\\ 0.71\end{tabular}  & \begin{tabular}{@{}c@{}}\\ 0.83\end{tabular} & \begin{tabular}{@{}c@{}}\\ 0.67\end{tabular}&\begin{tabular}{@{}c@{}}\\ 1\end{tabular} \\
	\quad$\boolean{Q}\rightarrow (\boolean{P}\leftrightarrow(\boolean{P}\wedge \boolean{Q}))$ & 0.82 & 0.75 & 1 & 1 & 1 \\
	\midrule
	\begin{tabular}{@{}l@{}}\textbf{Assoc., Comm., Dist. laws} \\ \quad$(\boolean{P}\wedge(\boolean{Q}\vee \boolean{R}))\leftrightarrow((\boolean{P}\wedge \boolean{Q})\vee(\boolean{P}\wedge \boolean{R}))$\end{tabular} & \begin{tabular}{@{}c@{}}\\ 0.69\end{tabular} & \begin{tabular}{@{}c@{}}\\ 0.72\end{tabular}  & \begin{tabular}{@{}c@{}}\\ 0.90\end{tabular}  & \begin{tabular}{@{}c@{}}\\ 0.89\end{tabular}  & \begin{tabular}{@{}c@{}}\\ 1\end{tabular}  \\
	\quad$(\boolean{P}\vee(\boolean{Q}\wedge \boolean{R}))\leftrightarrow((\boolean{P}\vee \boolean{Q})\wedge(\boolean{P}\vee \boolean{R}))$ & 0.69 & 0.72 & 0.90 & 0.94 & 1 \\
	\begin{tabular}{@{}l@{}}\textbf{De Morgans Laws} \\ \quad$(\boolean{P}\wedge \boolean{Q})\leftrightarrow\neg(\neg \boolean{P}\vee \neg \boolean{Q})$\end{tabular} & 0.75 & 0.75 & 1 & 1 & 1\\
	\quad$\neg(\boolean{P}\wedge \boolean{Q})\leftrightarrow\neg(\neg \boolean{P}\vee \neg \boolean{Q})$ & 0.75 & 0.75 &1 & 1 &1\\
	\begin{tabular}{@{}l@{}}\textbf{Material excluded middle} \\ \quad$(\boolean{P}\rightarrow \boolean{Q})\vee (\boolean{Q}\rightarrow \boolean{P})$\end{tabular}& \begin{tabular}{@{}c@{}}\\ 0.97\end{tabular}  & \begin{tabular}{@{}c@{}}\\ 0.83\end{tabular}  & \begin{tabular}{@{}c@{}}\\ 1\end{tabular}  &  \begin{tabular}{@{}c@{}}\\ 1\end{tabular} & \begin{tabular}{@{}c@{}}\\ 1 \end{tabular} \\
	\bottomrule
\end{tabular}
	\caption {Degrees of consistency given by the different logic relaxations under consideration over a representative set of tautologies.}
\label{tab:big_tautotable}
\end{table*}









\end{document}